\newtheorem{theorem}{Theorem}
\newtheorem{lemma}{Lemma}
\newtheorem{proof}{Proof}
\newtheorem{proposition}{Proposition}
\newtheorem{corollary}{Corollary}
\mathchardef\mhyphen="2D
\begin{document}
%
% paper title
% Titles are generally capitalized except for words such as a, an, and, as,
% at, but, by, for, in, nor, of, on, or, the, to and up, which are usually
% not capitalized unless they are the first or last word of the title.
% Linebreaks \\ can be used within to get better formatting as desired.
% Do not put math or special symbols in the title.
% \title{Scope expansion for LLM to increase the probability of the correctness of output}
\title{Active Thinking Model: A Goal-Directed Self-Improving Framework for Real-World Adaptive Intelligence}

\author{Hong~Su
% <-this % stops a space
\IEEEcompsocitemizethanks{\IEEEcompsocthanksitem H. Su is with the School of Computer Science, Chengdu University of Information Technology, Chengdu, China.\\
 E-mail: suguest@126.com. \\
\protect\\
% note need leading \protect in front of \\ to get a newline within \thanks as
% \\ is fragile and will error, could use \hfil\break instead.
}% <-this % stops an unwanted space
\thanks{}}

% The paper headers
\markboth{Journal of \LaTeX\ Class Files,~Vol.~14, No.~8, August~2015}%
{Shell \MakeLowercase{\textit{et al.}}: Bare Demo of IEEEtran.cls for IEEE Communications Society Journals}
% The only time the second header will appear is for the odd numbered pages
% after the title page when using the twoside option.
%
% * Note that you probably will NOT want to include the author's *
% * name in the headers of peer review papers.                   *
% You can use \IFCLASSOPTIONpeerreview for conditional compilation here if
% you desire.

% make the title area
\maketitle

\begin{abstract}
Real-world artificial intelligence (AI) systems are increasingly required to operate autonomously in dynamic, uncertain, and continuously changing environments. However, most existing AI models rely on predefined objectives, static training data, and externally supplied feedback, which restrict their ability to adapt, reflect, and improve independently. In this paper, we propose the \textit{Active Thinking Model} (ATM)—a unified cognitive framework that integrates goal reasoning, dynamic task generation, and self-reflective learning into an adaptive architecture. Unlike conventional systems that passively execute fixed procedures, ATM actively evaluates its performance through logical reasoning and environmental indicators, reuses effective methods to solve new problems, and generates novel strategies for unseen situations via a continuous self-improvement loop. A mathematically grounded theoretical analysis demonstrates that ATM can autonomously evolve from suboptimal to optimal behavior without external supervision and maintain bounded tracking regret under changing environmental conditions. 
\end{abstract}

% Note that keywords are not normally used for peerreview papers.
\begin{IEEEkeywords}
Active Thinking Model (ATM), self-reflective learning, autonomous adaptation, scenario-separated memory
\end{IEEEkeywords}

\IEEEpeerreviewmaketitle

% \section{Introduction}

% Which road to go? Cannot got from the LLM directly. 

% A person cannot only rely the basic reaction from first line as the environment will change.

% Person in some ways to new content, it is like the methods to think.
% Methods can

% Otherwise. Not for new situations, have to train for each scenarios.

\section{Introduction}
The growing complexity and uncertainty of real-world environments increasingly demand intelligent systems that can reason, adapt, and improve autonomously without relying on continuous external supervision. Traditional artificial intelligence (AI) and machine learning models \cite{soni2025recent} — while highly effective in well - defined and static domains—largely depend on predefined objectives, fixed training data, and externally driven feedback. Such systems often struggle when faced with dynamic, unpredictable, or open-world settings, where environmental conditions, goals, and evaluation criteria evolve over time. 

Recent progress in large language models (LLMs) and autonomous agents has demonstrated preliminary forms of contextual reasoning and task adaptation \cite{naveed2025comprehensive} \cite{wang2025history}. However, these systems still operate under rigid task boundaries and lack mechanisms for self-evaluation, long-term improvement, and environment-aware decision reconfiguration. In particular, they do not actively generate new goals, refine their internal methods, or reconcile contradictions between intended objectives and real-world outcomes. This limitation highlights the need for an intelligent framework that integrates reasoning, reflection, and self-improvement into a unified architecture.

To address these challenges, this paper proposes the \textit{Active Thinking Model} (ATM)—a real-world–aware, self-reflective architecture that enables AI systems to think and act autonomously. Unlike conventional task-execution frameworks, ATM introduces an internal \textit{thinking loop} that continuously plans, evaluates, and improves its own behavior. The model is designed around three key principles: (1) \textbf{goal-conditioned reasoning}, which allows the system to dynamically adjust its behavior based on explicit and implicit objectives; (2) \textbf{scenario-separated memory}, which records contextual information about environments, goals, and outcomes to guide future decisions; and (3) \textbf{continuous self-improvement}, achieved autonomously through internal reflection, simulation-based verification, and adaptive task reconfiguration.

Formally, ATM integrates these principles into a hierarchical architecture consisting of environmental perception, goal reasoning, dynamic tasking, self-evaluation, and reflective learning. Each component interacts through mathematically defined processes that ensure feedback consistency, performance convergence, and long-term adaptability. The system operates in a closed cognitive loop—collecting environmental information, planning and executing goal-oriented tasks, evaluating both direct and indirect outcomes, and refining its internal models accordingly.

The main contributions of this work can be summarized as follows:
\begin{enumerate}
    \item We propose a unified \textbf{Active Thinking Model (ATM)} that integrates goal reasoning, dynamic task generation, and self-reflective learning into a single adaptive cognitive architecture capable of autonomous method reuse, continuous self-improvement, and effective handling of novel problems in dynamic real-world environments.
    
    \item We develop a mathematically grounded framework for \textbf{continuous self-improvement and active measurement}, demonstrating that ATM can autonomously evolve from suboptimal to optimal behavior while maintaining bounded tracking regret under environmental changes. The model continuously measures, evaluates, and compares alternative methods through internal feedback and state-difference analysis, enabling reliable performance improvement even under uncertain or partially observable conditions without relying on external supervision.

    \item We design a \textbf{scenario-separated memory mechanism} that connects environmental states, goals, and actions, enabling context-aware reasoning, efficient experience reuse, and lightweight memory reconstruction through predictive inference.
\end{enumerate}

The remainder of this paper is organized as follows. Section~\ref{sec_related_work} reviews related research on adaptive reasoning, self-reflective architectures, and large-model–driven autonomous systems. Section~\ref{sec_atm_design} introduces the proposed \textit{Active Thinking Model} (ATM) and its real-world–aware design principles. Section~\ref{sec_theory_analysis} provides the theoretical analysis demonstrating the model’s capability for autonomous self-improvement and adaptability. Finally, Section~\ref{sec_conclusion} concludes the paper and discusses potential extensions for future research.

\section{Related Works} \label{sec_related_work}

\subsection{LLM-Based Agents, Planning, and Reflection}
Large language model (LLM)–driven agents have recently become a central topic in AI research, motivating systems that perform multi-step reasoning, planning, and tool use through natural-language instructions. Early frameworks such as Chain-of-Thought prompting and ReAct demonstrated that explicit reasoning traces can improve interpretability and decision quality by decomposing problems into intermediate steps \cite{wei2022cot,yao2023react}. Subsequent works like Reflexion \cite{shinn2023reflexion} and Toolformer \cite{schick2023toolformer} extended this paradigm with feedback loops and external tool invocation, allowing agents to revise outputs based on self-generated critiques or retrieved evidence. More recent multi-agent and planner–executor designs explore collaborative reasoning and dynamic decomposition, revealing that large models can coordinate specialized sub-modules to execute complex goals.

Despite this progress, most existing LLM-based agents operate under static prompting and episodic contexts—each reasoning session resets once a task ends. Their self-correction typically remains bounded within a single episode and lacks mechanisms for persistent memory, goal evolution, or long-term performance tracking. In contrast, the proposed \textit{Active Thinking Model} (ATM) introduces a \emph{goal-conditioned}, \emph{scenario-aware} control loop that maintains stateful memory and supports autonomous task creation. Rather than relying on predefined prompt templates, ATM dynamically generates, evaluates, and reuses methods across tasks and environments. This shift transforms reflection from an episodic post-hoc process into a continuous, environment-embedded cycle of learning and adaptation, enabling long-horizon self-improvement beyond the scope of current LLM-agent frameworks.

\subsection{Goal-Conditioned Control, Hierarchical RL, and POMDPs}
Goal-conditioned control has been extensively studied in the reinforcement learning (RL) community, where agents learn to achieve predefined objectives through interactions with the environment. Traditional frameworks, such as Markov decision processes (MDPs) and partially observable MDPs (POMDPs), formalize sequential decision-making under uncertainty \cite{sutton2018rl,kaelbling1996pomdp}. Hierarchical RL further decomposes complex problems into subgoals and subtasks, allowing agents to reuse learned policies through temporal abstraction \cite{barto2003options}. These paradigms emphasize reward optimization through repeated trial and error, often assuming that goals and reward functions are explicitly specified and remain stable throughout training. Similarly, model-predictive control (MPC) and goal-conditioned policies \cite{mayne2000mpc} focus on mapping goals to low-level actions under dynamic constraints, achieving impressive results in robotics and continuous control.

However, these frameworks share several inherent limitations when applied to open, evolving real-world environments. First, the definition of goals and rewards is typically static or manually crafted, making it difficult for agents to autonomously redefine or reprioritize objectives when conditions change. Second, adaptation usually occurs through parameter updates over long training horizons, rather than through immediate structural or cognitive reconfiguration. Third, the feedback loop in RL-based systems is externally driven by reward signals, which may not always capture higher-level reasoning or ethical considerations. In contrast, the proposed \textit{Active Thinking Model} (ATM) extends beyond classical RL by integrating \emph{goal reasoning}, \emph{scenario–goal mappings}, and \emph{dynamic method evolution} within a unified cognitive framework. Instead of optimizing a fixed reward, ATM continuously evaluates its own goals, adjusts task hierarchies in real time, and replaces outdated methods with newly generated ones. This enables a higher level of autonomy and interpretability, bridging the gap between symbolic goal reasoning and statistical learning in dynamic environments.

\subsection{Continual Learning and Lifelong Adaptation}
Continual learning aims to enable models to accumulate knowledge across multiple tasks without catastrophic forgetting. Traditional approaches mitigate forgetting through regularization-based methods that constrain parameter drift \cite{kirkpatrick2017ewc}, replay-based methods that revisit stored samples or latent representations \cite{rebuffi2017icarl}, and architectural modularity that dynamically expands or allocates subnetworks for new tasks \cite{delange2021continual}. Lifelong and meta-learning extend these principles by emphasizing generalization and knowledge transfer across tasks \cite{hewitt2020meta}. These approaches have achieved success in classification, robotics, and language modeling, allowing agents to retain and reuse prior knowledge in related contexts. However, they generally assume that tasks are predefined and arrive sequentially, with limited capacity for autonomous task discovery or reorganization once the task set is fixed.

Despite their progress, most continual learning systems remain primarily reactive: they adapt only when new labeled data or explicit task boundaries are presented. They also depend on static evaluation metrics and externally defined loss functions, which prevent agents from recognizing when existing methods become outdated or suboptimal in dynamic environments. The proposed \textit{Active Thinking Model} (ATM) redefines continual learning as an \emph{active and self-driven} process rather than a passive response to new data. Through its utility-based task lifecycle—comprising creation, suspension, and replacement—ATM autonomously manages its internal objectives and continuously seeks to improve them. Its \emph{scenario-separated memory} further supports lifelong adaptation by conditioning method retrieval and goal selection directly on environmental context, enabling knowledge reuse across evolving domains. In this way, ATM establishes a persistent, context-aware adaptation mechanism that maintains learning continuity even in the absence of explicit task supervision.

\subsection{Memory Architectures and Knowledge Retrieval}
Memory-augmented neural architectures have long sought to extend a model’s ability to reason over long temporal horizons and complex dependencies. Classical designs such as Neural Turing Machines (NTM) and Differentiable Neural Computers (DNC) \cite{graves2016dnc} introduced external differentiable memory modules that allow neural networks to read and write information across time steps, improving their capacity for sequence reasoning and algorithmic tasks. More recent retrieval-based frameworks, such as Retrieval-Augmented Generation (RAG) \cite{lewis2020rag} and other retrieval-enhanced transformers, integrate external knowledge sources into large language models (LLMs), enabling them to access factual or context-specific information dynamically. In parallel, episodic memory systems \cite{borji2023memories} and hierarchical memory networks have been proposed to model cognitive recall and multi-level information abstraction, providing improved interpretability and explainability in reasoning processes.

While these architectures have advanced long-term reasoning, they often emphasize retrieval accuracy rather than adaptive organization of stored information. Most rely on static key–value mappings or attention-based similarity metrics, which do not evolve with environmental context or task dynamics. Consequently, conventional memory systems can recall relevant information but lack the capacity to restructure, filter, or reinterpret experiences as conditions change. The \textit{Active Thinking Model} (ATM) addresses this gap through a structured and hierarchical \textit{scenario-separated memory}. This memory explicitly links environmental states, goals, and actions, organizing information into layered mappings such as \textit{question→solution} and \textit{scenario→goal/action}. By coupling retrieval with contextual adaptation and predictive reconstruction, ATM transforms memory from a passive storage unit into an active reasoning substrate that guides method selection, reflection, and future goal formation. This design enables adaptive knowledge reuse and dynamic reflection beyond what current memory architectures achieve.

\subsection{Online Learning, Bandits, and No-Regret Selection}
Online learning and multi-armed bandit frameworks provide the mathematical foundation for decision-making under uncertainty, where an agent must sequentially select actions and minimize cumulative regret compared to an optimal strategy. Foundational algorithms such as Upper Confidence Bound (UCB) \cite{auer2002ucb} and stochastic approximation methods \cite{robbins1951stochastic} formalized the trade-off between exploration and exploitation, establishing provable performance bounds. Extensions through online convex optimization \cite{cesa2006prediction} and adversarial bandit models further generalized these ideas to dynamic and non-stationary settings. These methods have found wide application in adaptive control, personalized recommendation, and reinforcement learning, providing guarantees that the agent’s performance converges to the optimal solution over time.

However, most online learning and bandit-based methods assume explicit numerical rewards and well-defined cost functions. They optimize quantitative objectives rather than qualitative or context-dependent goals and typically require continuous external feedback for performance evaluation. In contrast, the \textit{Active Thinking Model} (ATM) generalizes the no-regret principle beyond scalar rewards to encompass \emph{goal-weighted qualitative evaluation}. ATM continuously measures and compares the performance of multiple candidate methods using direct outcomes, environmental indicators, and reflective reasoning. Its decision-making process is guided by adaptive thresholds that determine when to create, replace, or retire methods and tasks, allowing it to maintain efficiency under changing conditions. %The theoretical analysis (Section~\ref{sec:correctnessandvalidation}) demonstrates that ATM achieves bounded tracking regret while autonomously evolving toward optimal behavior, bridging the gap between formal online learning theory and context-aware cognitive adaptation.

\subsection{Anomaly Detection, Change Detection, and Indirect Evaluation}
Anomaly and change detection have long been studied in signal processing, statistics, and machine learning as essential tools for identifying deviations from nominal behavior. Classical methods such as likelihood-ratio tests and cumulative sum (CUSUM) algorithms \cite{basseville1993detection} enable sequential detection of abrupt distributional shifts, while density-based and clustering-based techniques identify outliers in high-dimensional data \cite{chandola2009anomaly}. In recent years, deep learning–based detectors, including autoencoders and graph neural networks, have extended these principles to complex, non-linear domains such as cybersecurity, sensor networks, and autonomous vehicles. These approaches typically rely on reconstructing or predicting expected behavior and flagging deviations as anomalies, forming the basis of reliability monitoring and fault diagnosis in intelligent systems.

While these techniques effectively signal abnormal events, they often treat anomaly detection as an isolated diagnostic function rather than an integrated cognitive process. They lack mechanisms to connect deviations with goal fulfillment or to trigger adaptive reasoning in response to detected irregularities. The \textit{Active Thinking Model} (ATM) operationalizes these concepts within its \emph{active measurement} and \emph{reflection} modules. Deviations—termed \textit{flags}—are detected not only from primary outputs but also from indirect environmental or behavioral indicators. These may include inconsistencies with goals, contradictions in logical reasoning, or abnormal sensor patterns. Once detected, such deviations prompt ATM to initiate reflection, method replacement, or dynamic task adaptation, effectively closing the loop between perception, evaluation, and cognitive response. In this way, ATM fuses anomaly detection with self-assessment, transforming passive monitoring into proactive self-correction that sustains performance in uncertain environments.

\subsection{Simulation-Based Verification and Digital Twins}
Simulation-based verification and digital-twin technologies have emerged as critical tools for testing, validating, and optimizing complex systems under diverse operational conditions. In traditional engineering contexts, simulation has been used for design verification, safety assessment, and fault prediction, allowing virtual experimentation before physical deployment. The concept of the \textit{digital twin} extends this by maintaining a dynamic, real-time digital replica of a physical system that mirrors its states, behaviors, and interactions \cite{twin_survey,schleich2019digital}. These virtual environments enable counterfactual reasoning, sensitivity analysis, and predictive maintenance, providing a safe and cost-effective means to analyze system responses to hypothetical scenarios. As a result, digital-twin technology has found widespread application in manufacturing, aerospace, autonomous driving, and smart infrastructure, where reliability and explainability are paramount.

Despite their effectiveness, most existing simulation and digital-twin frameworks function as external validation layers—tools used to assess predefined models rather than integral components of autonomous cognition. They rarely participate in the reasoning or decision-making processes of AI systems in real time. The proposed \textit{Active Thinking Model} (ATM) embeds simulation-based comparison directly into its cognitive architecture, transforming simulation from a passive testing tool into an active reasoning mechanism. When discrepancies arise between predicted and observed outcomes, ATM initiates internal simulation to reconstruct potential causal paths and identify hidden errors or unobserved environmental factors. This process allows the model to validate and refine its own decisions without external supervision. By merging simulation-based verification with its reflective and goal-oriented reasoning, ATM achieves continual consistency checking and predictive foresight, ensuring safer and more reliable behavior in dynamic real-world environments.

\section{Active Thinking Model in Real-World Contexts} \label{sec_atm_design}

\subsection{Problem Setting and Challenges} \label{sec_problem_challenges}
Artificial intelligence (AI) systems are increasingly deployed in real-world scenarios such as autonomous driving and field robotics. Unlike controlled environments, the real world is inherently dynamic, uncertain, and continuously evolving. Fixed strategies or static models are therefore insufficient for robust adaptation. An AI system must not only execute pre-defined methods but also generate, evaluate, and refine new methods suited to its current environment.

To achieve effective interaction with the real world, an intelligent agent must satisfy two key requirements. First, it must detect the dominant features of the current environment, identify risks, and dynamically set appropriate goals based on situational context. Second, it must continuously improve its reasoning and operational methods to maintain performance across changing conditions. For example, in autonomous driving, when road conditions become hazardous, the system should autonomously prioritize safety and adjust its behavior accordingly.

These requirements imply that real-world AI systems must go beyond static policy learning and incorporate mechanisms for goal adaptation, self-improvement, and context-aware method generation. Such capabilities form the foundation of the proposed \textit{Active Thinking Model} (ATM), which supports autonomous reasoning, adaptive goal management, and dynamic method evolution.

\subsection{Real-World–Aware Active Thinking Model (ATM)} \label{sec_atm_design}

The proposed \textit{Active Thinking Model} (ATM) is designed to enable an AI system to think and act autonomously. In this context, “thinking” refers to the system’s capability to form goals, act toward them, and continuously improve itself—performing tasks more effectively and developing better methods over time. Rather than executing externally prescribed procedures, ATM actively formulates and carries out its own tasks to adapt to real-world complexity. 

ATM can select the most appropriate method from its learned repertoire based on internal evaluation. It judges which method performs better under the current environmental conditions and can emphasize specific goals when necessary. For example, in an autonomous driving scenario, if the environment becomes hazardous, the model can automatically prioritize safety and adjust its control strategies accordingly.

Both methods and goals are drawn from accumulated experience, as described in Section~\ref{sec_scenario_separate_memory}. While the underlying task plans are initially generated by a large language model (LLM), ATM continuously records, evaluates, and refines them to identify which methods are best suited for the current situation. This integration of autonomous goal setting, method selection, and continuous self-assessment allows ATM to maintain robust performance in dynamic real-world environments.

\subsubsection{Goal-/Target-Conditioned Tasks} \label{sec_task_with_goal}
A \textit{task} in the Active Thinking Model (ATM) is defined as a sequence of actions executed to achieve a specific goal. Formally, a task $\mathcal{T}$ can be represented as an ordered set of actions:
\[
\mathcal{T} = \{a_1, a_2, \ldots, a_k\}, \quad a_i \in \mathcal{A},
\]
where $\mathcal{A}$ denotes the space of all possible actions. Each action $a_i$ is influenced by environmental conditions, internal states, and accumulated experience stored in memory.

Before execution, ATM establishes one or more \textit{goals} $G = \{g_1, g_2, \ldots, g_m\}$ that guide reasoning and decision-making throughout the task. These goals act as high-level constraints rather than executable operations. At each step $t$, the large language model (LLM) receives the current goal $g_t$ as a global conditioning parameter and generates the next action according to:
\[
a_{t+1} = f_{\text{LLM}}(E_t, S_t, g_t),
\]
where $E_t$ represents the environmental state and $S_t$ the system state at time $t$. This formulation ensures that every generated action remains consistent with the active goal. For instance, in autonomous driving, if “safety” is defined as the goal, each planned maneuver must satisfy safety-related constraints during both planning and execution.

ATM continually monitors goal achievement using a goal-compliance function $C(g_t, E_t, S_t)$, which evaluates whether progress toward the goal has been made:
\[
C(g_t, E_t, S_t) =
\begin{cases}
1, & \text{if goal } g_t \text{ is satisfied},\\
0, & \text{otherwise}.
\end{cases}
\]
If $C(g_t, E_t, S_t) = 0$, ATM re-invokes the LLM to replan or adjust future substeps to restore goal alignment. This adaptive, goal-conditioned control contrasts with traditional rule-based systems, which rely on static task definitions.

Goals may be either \textit{explicit} (e.g., ensuring safety or accuracy) or \textit{implicit} (e.g., improving efficiency or energy saving). Both types are retained in memory for future reference and reuse. Importantly, goals are not static—they can evolve according to environmental changes or the internal state of the system. Some goals are selected because ATM has previously learned their importance, while others emerge dynamically from current situational demands. By maintaining a scenario–goal mapping, as described in Section~\ref{sec_scenario_separate_memory}, ATM can identify and select the most appropriate goal for each real-world context.

\subsubsection{Continuous Self-Improvement via Background Tasks} \label{sec_always_running}
The Active Thinking Model (ATM) should incorporate a built-in mechanism for continuous self-improvement. It autonomously refines its methods based on real-world interaction, learning from both failures and successes. Improvement is triggered not only when undesirable results occur but also when more effective methods are discovered, allowing ATM to progressively accumulate experience and expand its internal repertoire of strategies.

To achieve this, ATM maintains a set of background or auxiliary tasks dedicated to monitoring and enhancing its own performance. These self-improvement tasks are not predefined by the user but are instead generated by the system itself—either in response to observed environmental changes or to ensure the stability and correctness of ongoing primary tasks, as in section \ref{sec_spare_tasks}. Through continuous observation and information collection, ATM decides when to initiate, modify, or terminate these auxiliary tasks.

\subsubsection{Dynamic Tasking and Method Evolution} \label{sec_dynamic_thinking}

In the Active Thinking Model (ATM), all components—including tasks, goals, and improvement mechanisms—are dynamic. Each element can be created, modified, or replaced as the system acquires new experiences or encounters novel environmental conditions. This ensures that ATM continuously adapts its structure and reasoning strategies to meet changing real-world demands.

Let $\mathcal{T} = \{T_1, T_2, \ldots, T_n\}$ represent the set of active tasks at time $t$. Each task $T_i$ is defined by a tuple
\[
T_i = (G_i, A_i, E_t, S_t),
\]
where $G_i$ is the task goal, $A_i$ the action sequence, and $(E_t, S_t)$ denote the environmental and system states. ATM evaluates the relevance and performance of each task through a task-utility function:
\[
U(T_i) = f_u(G_i, E_t, S_t),
\]
and dynamically updates the task set as
\[
\mathcal{T}_{t+1} = \mathcal{T}_t - \{T_i \mid U(T_i) < \theta_d\} + \{T_j^{\text{new}} \mid U(T_j^{\text{new}}) > \theta_c\},
\]
where $\theta_d$ and $\theta_c$ are thresholds for task deletion and creation, respectively. This formulation enables ATM to autonomously discard low-utility tasks and instantiate new ones when improved or contextually relevant methods are detected.

Tasks within ATM are therefore fully replaceable and self-regulating. When the system identifies a performance degradation or detects opportunities for enhancement, it can autonomously generate and execute new processes—such as environmental sensing, memory reorganization, or internal reasoning—while suspending or revising ineffective ones. Even predefined maintenance routines can be modified or restored if the system determines that a previous change negatively impacts performance.

Furthermore, the mechanisms governing method evaluation and improvement are also adaptive. When a superior evaluation criterion $f_p'$ outperforms the current one $f_p$, ATM replaces or refines it according to
\[
f_p \leftarrow (1 - \beta)f_p + \beta f_p', \quad \beta \in (0,1),
\]
where $\beta$ controls the adaptation rate. This recursive adaptability ensures that both the reasoning process and the improvement strategy evolve jointly over time, allowing ATM to maintain long-term robustness, stability, and relevance in dynamic real-world environments.

\subsubsection{System Architecture of Active Thinking Model (ATM)} \label{sec_atm_architecture}

The overall architecture of the Active Thinking Model (ATM) is designed to enable autonomous perception, goal-oriented decision-making, and continuous self-improvement through a closed feedback loop. The perception component identifies issues and key environmental factors that must be resolved or considered before further actions are taken, while the decision-making component formulates strategies and goals accordingly. The feedback mechanism ensures that the system continually refines its methods based on real-world outcomes. Figure~\ref{fig_atm_architecture} conceptually illustrates the core components and their interactions.

ATM consists of four major functional modules:

\begin{enumerate}
    \item \textbf{Environmental Perception and Monitoring Module:}
    This module continuously collects information from both the external environment and the internal state of the system. It detects abnormalities, identifies deviations from expected patterns, and records contextual features for later analysis and reflection. Its primary function is to identify the key issues and environmental factors that must be resolved or addressed before further actions are taken, thereby providing the foundation for informed decision-making in subsequent modules.

    \item \textbf{Task Generation and Management Module:}
    Tasks are the basic operational units of ATM. This module is responsible for creating, scheduling, and managing three categories of tasks: 
    (i) \textit{self-improvement tasks} that refine internal methods; 
    (ii) \textit{adaptive tasks} that respond to environmental or contextual changes; and 
    (iii) \textit{goal-driven tasks} externally assigned or derived from user requests. 
    Abstract task descriptions are converted into executable sequences through the LLM planner.

    \item \textbf{Optimization and Evaluation Module:} 
    This module maintains a record of all executed methods, their associated contexts, and observed outcomes. It evaluates task performance, compares alternative methods, and identifies those that yield superior results. Methods that perform poorly can be refined, replaced, or removed, ensuring that the model evolves toward higher efficiency and stability.

    \item \textbf{LLM-Driven Planning and Reasoning Module:}
    The large language model serves as the cognitive core that transforms high-level goals into concrete task plans. During execution, it provides reasoning support, replans sub-steps when deviations occur, and integrates experiential knowledge stored in the scenario-based memory (Section~\ref{sec_scenario_separate_memory}).
\end{enumerate}

These modules operate in a continuous cycle. The perception module feeds real-time observations to the task generation module, which determines whether new actions or improvements are required. The optimization module evaluates completed tasks and updates stored experience, while the LLM planner refines reasoning strategies based on this evolving knowledge. 

To prevent the system from becoming static, ATM includes a dynamic updating mechanism that allows any task, method, or improvement process to be added, modified, or replaced. This ensures that the system remains adaptive and capable of lifelong learning in complex and changing environments.

\begin{figure}[!t]
\centering
\includegraphics[width=3.8in]{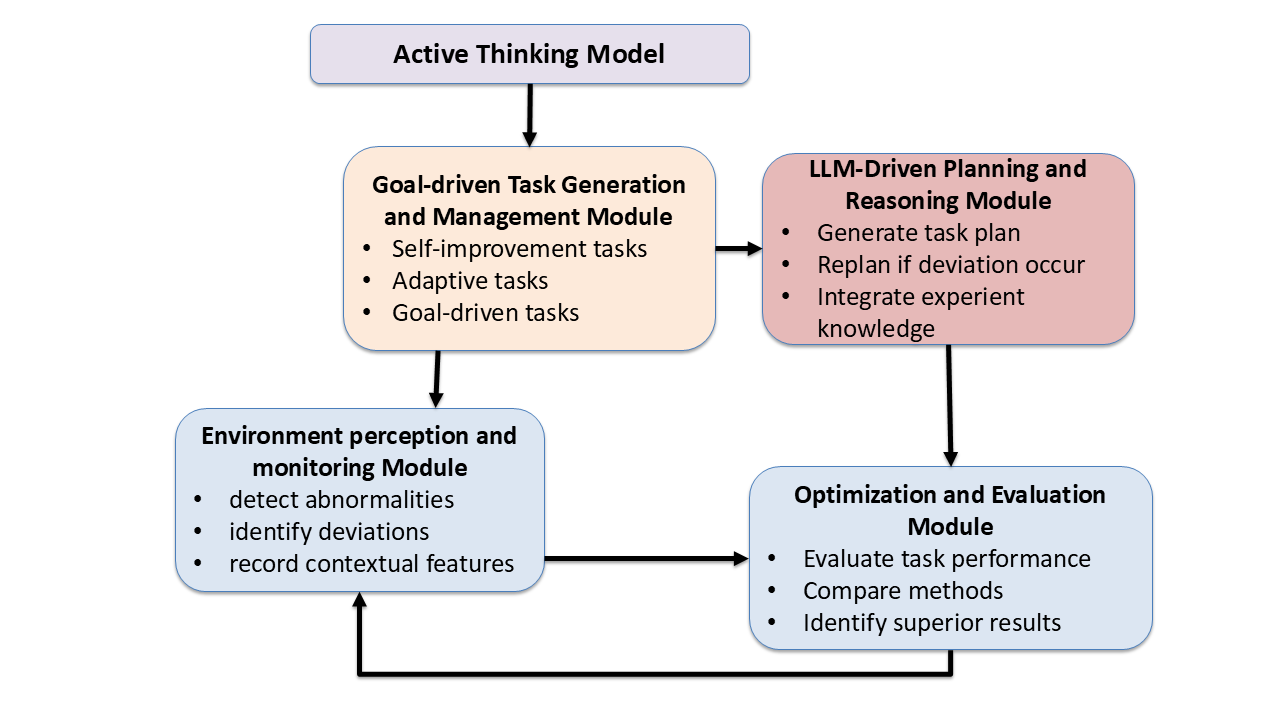}
\caption{System architecture of the proposed Active Thinking Model (ATM).}
\label{fig_atm_architecture}
\end{figure}

\subsection{Task Execution with Environmental Checkpoints and Method Selection} 
\label{sec_task_executed}
Before task execution, the large language model (LLM) generates an initial \textit{draft plan} based on currently available information. This plan specifies how the task should progress through a series of high-level stages without executing the actual actions. To ensure correctness during execution, the plan is divided into several \textit{environmental checkpoints}, each representing a key observable condition or milestone in the external environment.

At each checkpoint, ATM compares the current environmental state and task progress with the expected state defined in the plan. The degree of deviation between the actual and planned conditions, denoted by $\Delta E_t = \|E_t - \hat{E}_t\|$, is used to evaluate consistency. If $\Delta E_t$ exceeds a threshold $\theta_c$, the system triggers a corrective mechanism—either refining the plan or invoking the LLM to generate an adjusted subplan. These checkpoints thus serve both as validation anchors and as guidance points to steer subsequent actions toward goal compliance.

Once the Active Thinking Model (ATM) has planned a task with its associated goals—as described in Section~\ref{sec_task_with_goal}—using its most suitable learned methods, the system proceeds to execute it. During execution, ATM continuously ensures that each step remains aligned with both the predefined goals and the evolving environmental context. Whenever discrepancies or unfulfilled goals are detected, the system dynamically replans using the LLM, ensuring real-time adaptation and consistency between internal reasoning and external conditions.

\subsubsection{Environment Sensing: Main Issues and Change Detection} \label{sec_env_changes}
During real-world task execution, continuous environmental sensing is essential. The objective of this process is twofold: first, to identify the key issues and contextual factors that the system must address; and second, to detect any significant changes that could affect subsequent actions. Environmental information includes spatial factors, the behaviors of surrounding objects, and other situational parameters relevant to the current task.

In practice, the Active Thinking Model (ATM) provides the collected environmental information $E_t$ to the large language model (LLM), which analyzes it to determine the most appropriate next actions. 

As environmental conditions evolve, ATM continuously compares the current state $E_t$ with prior observations $E_{t-1}$ to identify significant temporal or spatial differences:
\[
\Delta E_t = \| E_t - E_{t-1} \|,
\]
where $\Delta E_t$ quantifies the degree of environmental change. If $\Delta E_t$ exceeds a predefined threshold $\theta_e$, the system triggers task adaptation or re-planning to maintain performance stability \cite{su2025difference}. This continuous monitoring enables ATM to respond proactively to dynamic real-world variations.

Information collection must be timely and persistent. Real-time or near-real-time sensing enables ATM to respond quickly to emerging risks or anomalies, while sustained monitoring prevents the system from overlooking gradual but critical environmental shifts. The ultimate goal of this process is to maintain an up-to-date understanding of environmental status and dynamic changes, providing a reliable basis for adaptive action planning and decision-making.

\subsubsection{Method Selection Under Uncertainty and Unknowns} \label{sec_method_choice_unknowns}

When selecting an appropriate method for a specific scenario, the Active Thinking Model (ATM) queries its knowledge base using both the current question and the contextual scenario to identify the most suitable learned method. In previous work, a direct mapping between questions and solutions was established \cite{su2025method}; in real-world interaction, ATM extends this mapping by incorporating scenario information to account for environmental variability.

For unknown or previously unencountered situations, the Active Thinking Model (ATM) retrieves the question to be solved, denoted as $Q_u$, and queries the large language model (LLM) to identify the most similar known method $m^{*}$ capable of addressing analogous cases. Formally, the system estimates similarity as
\[
m^{*} = \arg\max_{m_i \in \mathcal{M}} \, \text{Sim}(Q_u, Q_i),
\]
where $\text{Sim}(\cdot)$ measures the semantic or contextual similarity between the unknown question $Q_u$ and previously stored questions $Q_i$. This process follows the principle of \textit{cross-question method reuse} proposed in \cite{su2025cross}, enabling ATM to generalize learned methods and apply them effectively to novel contexts.

In highly urgent situations requiring immediate response, ATM can bypass deliberative reasoning and employ an \textit{intuition-based} strategy \cite{su2025layered}, directly applying the most probable or instinctive action generated by the LLM. This multi-level selection mechanism—ranging from scenario-based reasoning to intuition-driven reaction—enables ATM to maintain both responsiveness and adaptability under uncertainty.

\subsection{Scenario-Separated Memory with Prediction and Environment Context} \label{sec_scenario_separate_memory}

The memory mechanism in the Active Thinking Model (ATM) is designed to facilitate self-improvement by identifying the potential causes of incorrect method selection and by analyzing why certain methods yield superior outcomes. Through continuous accumulation of experiential data, the system incrementally refines both its decision-making processes and its goal-setting strategies.

To determine more suitable methods or assign more effective goals for a given task, ATM maintains structured mappings among problems, solutions, and environmental contexts. As introduced in the method-based approach \cite{su2025method}, questions are associated with their corresponding solutions. In this work, we extend that framework by incorporating an additional layer that maps each \textit{scenario}—representing the current environmental state and internal system condition—to an appropriate goal. This dual mapping of \textit{question–solution} and \textit{scenario–goal} relationships enables ATM to reason contextually and select optimized strategies for dynamic real-world interaction.

\subsubsection{Goal Storage and Scenario-Keyed Indexing} \label{sec_goal_storage}

Building on the method-based approach \cite{su2025method}, where each question is mapped to its solution, the Active Thinking Model introduces goal–scenario associations to improve contextual adaptability. A \textit{scenario} is defined as the state of both the external environment and the internal condition of the system, excluding explicit actions or goals.

Accordingly, ATM stores goal information as mappings from scenarios to corresponding goals.

Let $\mathcal{S}$ be the space of scenarios, $\mathcal{G}$ the set of goals, $\mathcal{A}$ the action set, and $\mathcal{O}$ the outcome space. 
ATM maintains two scenario-indexed memories:

\paragraph*{Scenario $\rightarrow$ Goal (weighted multi-map)}
\[
\mathcal{M}_{G}:\ \mathcal{S}\to \Delta(\mathcal{G}), 
\qquad 
\mathcal{M}_{G}(s)=\{(g,\;w_g(s)):\ g\in\mathcal{G}\},
\]
where $w_g(s)\ge 0$ is a stored preference (optionally normalized so $\sum_{g} w_g(s)=1$). 
Goal retrieval uses a maximum-weight rule
\[
g_t=\arg\max_{g\in\mathcal{G}} w_g(s_t),
\]
or a probabilistic draw $g_t \sim \text{Cat}(w_g(s_t))$ (where $\text{Cat}$ denotes a categorical distribution that samples goals according to normalized weights $w_g(s_t)$) if stochastic selection is desired.

\paragraph*{Scenario $\rightarrow$ Actions/Outcomes (structured memory)}
\[
\begin{aligned}
\mathcal{M}_{AO}:\ &\mathcal{S}\to \Delta(\mathcal{A}\times\mathcal{O}), \\[2pt]
&\mathcal{M}_{AO}(s)=\{((a,o),\,w_{a,o}(s)):\ a\in\mathcal{A},\,o\in\mathcal{O}\}.
\end{aligned}
\]

with $w_{a,o}(s)\ge 0$ (optionally $\sum_{a,o} w_{a,o}(s)=1$). 
Retrieval can target actions, outcomes, or joint pairs, e.g.,
\[
(a_t,o_t)=\arg\max_{(a,o)} w_{a,o}(s_t).
\]

\noindent\textit{Optional update:} after executing $(a_t,o_t)$ under $s_t$ and observing a goal-compliance signal $r_t$,
\[
w_{x}(s_t)\ \leftarrow\ (1-\eta_t)\,w_{x}(s_t)+\eta_t\,\phi(r_t),\quad x\in\{g\}\ \text{or}\ x\in\{(a,o)\},
\]
where $\eta_t>0$ is a step size and $\phi(\cdot)$ maps feedback to a weight increment.

This structure forms the foundation for selecting improved methods in future interactions, allowing the system to compare alternative approaches applied under identical or similar conditions.

Unlike traditional frameworks that only store question–solution pairs, the scenario-based extension establishes a hierarchical, multi-level memory. The first level maps questions to potential solutions, while the second refines these mappings based on specific scenarios, enabling context-aware method selection. Additional attributes—such as time, environmental category, or internal state—can be incorporated as extra layers, with each representing a distinct attribute dimension. This hierarchical design enables ATM to perform fine-grained retrieval and dynamic adaptation of methods in complex, changing environments.

\subsubsection{Process History with Prediction and Environmental Fill-In} \label{sec_process_history}

To facilitate effective reflection and improvement, the Active Thinking Model (ATM) maintains a temporal record of scenarios and their corresponding events. Each recorded entry includes a summarized description of the situation along with a timestamp whose duration depends on the specific scenario. This process history allows the system to revisit prior events—such as failure cases—to analyze potential causes and identify influential factors affecting task outcomes. 

The memory is organized chronologically, with more recent entries stored at the front. The recording interval can be adjusted according to task requirements, ensuring that key information is preserved while minimizing unnecessary data.

ATM also employs a predictive and environment-assisted mechanism to maintain a lightweight yet informative memory. Instead of storing every low-level detail, the system records only key actions or outcomes and relies on the large language model (LLM) to infer intermediate steps through predictive reasoning. Formally, given a sequence of recorded key states 
\[
\{s_1, s_k, s_T\}, 
\]
the LLM estimates the missing intermediate states by prediction:
\[
\hat{s}_t = f_{\text{pred}}(s_1, s_T, E_t), \quad t \in (1, T),
\]
where $f_{\text{pred}}(\cdot)$ denotes the LLM-based prediction function and $E_t$ represents environmental context.  

For common or repetitive real-world events, ATM stores only the deviation $\Delta E_t = E_t - E_{\text{norm}}$ from the environmental norm $E_{\text{norm}}$, reconstructing the complete context when necessary using surrounding environmental cues. This approach minimizes memory redundancy while preserving essential contextual information, ensuring that the system retains the knowledge needed for subsequent analysis, reflection, and improvement.

\subsection{Spare-Time Optimization for Method Improvement} \label{sec_spare_tasks}
The Active Thinking Model (ATM) periodically runs background or “spare-time” tasks dedicated to recalling past experiences, evaluating prior actions, and refining its internal methods. The objective of these tasks is to identify superior strategies and avoid previously ineffective ones, thereby improving future interactions with the real world. These spare-time tasks typically process recently executed operations that have not yet been fully analyzed.

The following subsections describe several key mechanisms employed for method improvement.

\subsubsection{Learning from Outcomes: Good vs.\ Bad Results} \label{sec_good_bad_results}
When an action produces favorable results, the AI system should store it for future reuse. Following the method-based framework \cite{su2025method} and section \ref{sec_goal_storage}, each method can be represented in a layered storage structure. The first layer maps a \textit{question} to a set of corresponding \textit{scenarios}, while the second layer maps each \textit{scenario} to its associated \textit{solution}. The scenario field captures the environmental conditions and task context in which the method was applied, since a successful method in one scenario may not be suitable for another. This hierarchical representation enables ATM to retrieve and apply the most context-appropriate method when similar situations arise.

For actions leading to undesirable results, the system recalls the corresponding scenario and analyzes potential causes, including environmental factors, the chosen actions, and the system’s internal state. If a method that previously yielded good results performs poorly under a new scenario, the differing environmental and task conditions are recorded to refine future selection. When faced with similar conditions, ATM queries the large language model (LLM) to determine whether the current scenario matches one that previously caused failure; if so, the associated method will be avoided.

If an action produces a poor outcome and no corresponding corrective method exists, two approaches can be applied. The first is to search for a similar scenario and retrieve an analogous method, as described in \cite{su2025cross}. The second approach is for ATM to employ adaptive tactics to mitigate the negative effects. Such tactics include gradually reducing the intensity of the current action, temporarily pausing execution, or reverting to a previously safe state. These responses are selected based on the severity of deviation. For example, if two autonomous vehicles approach an intersection too quickly and risk collision, both systems can reduce speed or stop to prevent failure. This tiered adjustment mechanism enables ATM to maintain operational stability and safety even under unexpected or high-risk conditions.

\subsubsection{Mining Always-Occurring Patterns (Temporal/Spatial)} \label{sec_always_patterns}
Certain events or phenomena tend to occur together, either sequentially over time or concurrently within the same spatial context. When one event consistently precedes another, it implies a \textit{temporal association}; conversely, when multiple events co-occur within the same environment or spatial region, it represents a \textit{spatial association}. Formally, let $\mathcal{E} = \{e_1, e_2, \ldots, e_n\}$ denote a set of observed events. A temporal association exists if
\[
P(e_j | e_i, t_j > t_i) > \theta_t,
\]
where $\theta_t$ is a threshold probability indicating a significant time-ordered dependency. Similarly, a spatial association is established when
\[
P(e_j | e_i, \text{loc}(e_j) = \text{loc}(e_i)) > \theta_s,
\]
with $\theta_s$ representing the spatial co-occurrence threshold. These relationships allow ATM to identify recurring event patterns that reveal causal or contextual dependencies across time and space.

In this paper, we primarily focus on temporal associations. During spare-time analysis, ATM recalls memories in chronological order to identify correlations, evaluate patterns, and refine its understanding of event dependencies. Even for complex spatial events, portions of the experience may be recorded at different times, which can still be treated as time-sequenced data.

The purpose of this process is to detect recurring event sequences and identify coherent patterns that may not be immediately obvious. Let events be represented as a time-ordered sequence 
\[
E = \{e_1, e_2, \ldots, e_T\},
\]
where \(e_t\) denotes an observed environmental or internal factor at time \(t\). ATM estimates the joint co-occurrence probability
\[
P(e_i, e_j) = \frac{\text{count}(e_i \rightarrow e_j)}{\text{count}(e_i)},
\]
and records a \emph{common occurrence sequence} if \(P(e_i, e_j) > \theta_c\), where \(\theta_c\) is a predefined coherence threshold. 

When two or more factors consistently appear together—either sequentially or concurrently—the system encodes their joint dependency as a coherence relation
\[
\mathcal{C} = \{(e_i,e_j)\,|\, P(e_i,e_j)>\theta_c\}.
\]
In subsequent reasoning, ATM ensures that related events in \(\mathcal{C}\) are considered jointly rather than independently. This formal mechanism enables the discovery of latent dependencies and supports the emergence of higher-level behavioral rules that generalize across contexts.

For instance, if a vehicle consistently experiences uneven motion on a particular road, ATM can infer a correlation between that road condition and the resulting vibration pattern. 

Additionally, ATM can decompose large processes into smaller, incremental steps, allowing each to be analyzed and improved individually. This granular refinement supports progressive optimization of system behavior over time.

\subsubsection{Reflective Analysis: Reasoning and Seeking Better Methods} \label{sec_reflection}
While the previous mechanisms focus on analyzing task outcomes, reflection emphasizes examining the internal processes of task execution. Reflection enables the Active Thinking Model (ATM) to identify influential factors at each stage of a task and determine how existing methods can be improved. Formally, a task $\mathcal{T}$ can be represented as a sequence of states and actions 
\[
\mathcal{T} = \{(s_1, a_1), (s_2, a_2), \ldots, (s_T, a_T)\}.
\]
During reflection, ATM provides the large language model (LLM) with a summarized trace of this sequence, denoted by $\Sigma(\mathcal{T})$, which encodes key transitions and contextual information. The LLM performs structured reasoning on this trace to produce improvement suggestions:
\[
R_{\text{LLM}} = f_{\text{reflect}}(\Sigma(\mathcal{T}), E_t, S_t),
\]
where $R_{\text{LLM}}$ represents the reasoning output and $f_{\text{reflect}}(\cdot)$ denotes the reflection function conditioned on environmental and internal states.

The results of this reflective analysis, $R_{\text{LLM}}$, are stored in memory as experience tuples for future adaptation and learning. This process allows ATM not only to refine its execution strategies but also to evolve its reasoning capability by iteratively improving the reflection mechanism itself.

Reflection in ATM is dynamic—new reflection strategies can be added, and existing ones can be modified or replaced as the system evolves. Two primary reflection approaches are employed:

\textbf{(1) Reasoning:}  
When encountering unresolved or novel issues, ATM conducts reasoning to uncover the underlying causes. This includes identifying the true nature of the problem, understanding the environmental context, and analyzing temporal and spatial relationships among events. Reasoning helps prevent repetitive mistakes by explaining why particular actions were taken and why certain external objects behaved as they did. 

Environmental indicators or “flags” are used as objective references; if the reasoning outcomes align with these external signals, the reasoning is considered valid. 

Each process stage is associated with its own reasoning record, which is cross-checked for consistency. If conflicts arise, ATM generates new reasoning to reconcile them, thereby improving the internal knowledge base.

\textbf{(2) Step-Level Improvement and Corrective Adjustment Mechanism:}  
ATM also evaluates whether existing methods can be enhanced. This process aims to discover improvements based on the currently deployed methods. Rather than discarding an entire method, ATM focuses on identifying one or several inefficient or overly complex steps within it that hinder performance or stability. For each method \(m_i\) represented as a sequence of actions \(\pi_i = \{a_1, a_2, \ldots, a_T\}\), the system computes a step-wise cost profile
\[
C(a_t) = \lambda_1\,T(a_t) + \lambda_2\,E(a_t) + \lambda_3\,R(a_t),
\]
where \(T(a_t)\), \(E(a_t)\), and \(R(a_t)\) denote time consumption, energy/resource usage, and associated risk respectively, and \(\lambda_k\) are weight coefficients. Steps with \(C(a_t) > \theta_{\text{eff}}\) (an efficiency threshold) are flagged as candidates for improvement.

ATM then explores corrective strategies at the step level, including:  
(i) \textit{degree reduction}—reducing the action’s intensity or duration (e.g., slower movement or partial computation);  
(ii) \textit{temporary suspension}—pausing the execution of a suspected problematic step to assess its necessity; and  
(iii) \textit{reversion}—rolling back the system state to a previously validated safe checkpoint if performance degradation is detected.  
Formally, the local improvement process minimizes expected cumulative cost subject to safety and goal constraints:
\[
a_t^{*} = \arg\min_{a_t' \in \mathcal{A}} \; \mathbb{E}[C(a_t')] \quad \text{s.t.} \quad R(\pi_i') \ge R(\pi_i), \; S(\pi_i') \ge S_{\min},
\]
where \(S(\pi_i')\) represents a safety score and \(S_{\min}\) the acceptable threshold.  
Through this adaptive correction, ATM ensures that improvements are made incrementally and safely—allowing continuous refinement of existing methods without external intervention.

Even when task outcomes are satisfactory, ATM may still analyze intermediate steps to discover opportunities for optimization, leading to gradual yet continual improvement. This proactive reflection ensures that both successful and failed experiences contribute to long-term performance refinement.

\subsection{Active Measurement and Indirect Evaluation} \label{sec_active_measurement}
The Active Thinking Model (ATM) does not rely solely on external feedback from users; instead, it actively collects information from the environment, its internal state, and task outcomes to perform self-assessment. By querying the large language model (LLM), ATM evaluates task performance both through direct results and through indirect indicators that reflect system behavior. This enables a more comprehensive understanding of task success, even when explicit evaluation metrics are unavailable.

When task outcomes can be measured directly—such as through quantitative performance metrics or observable deviations from predefined goals—ATM can readily determine correctness. However, in many real-world scenarios, outcomes are ambiguous or only partially observable. In such cases, ATM employs \textbf{indirect evaluation} mechanisms to infer performance quality. These mechanisms include logical analysis and feedback from the environment or other agents, encompassing the detection of deviations from normal patterns, contradictions, and anomalies in related environmental signals. The following subsections describe several of these indirect evaluation methods in detail.

\subsubsection{Deviation from Expected Outcomes via Real-World Flags} \label{sec_flags_deviation}
When results are not immediately interpretable, the Active Thinking Model (ATM) evaluates them against expected real-world indicators, referred to as “flags.” Each flag $F_i$ represents a measurable environmental or behavioral variable with an expected normal range $[F_i^{\min}, F_i^{\max}]$. Given an observed value $\hat{F}_i$, the system computes its deviation
\[
\Delta F_i = |\hat{F}_i - F_i^{\text{exp}}|,
\]
where $F_i^{\text{exp}}$ is the expected value under normal conditions. If $\Delta F_i > \theta_f$ for any $i$, where $\theta_f$ is the deviation threshold, ATM infers a potential error or inconsistency in the task outcome. This quantitative evaluation allows ATM to detect subtle anomalies even when the overall result is not directly interpretable.

Each task inherently carries an implicit goal representing an ideal or complete outcome. For instance, when the goal is to clean a soymilk machine, cleaning only the outer surface while neglecting internal components yields an incomplete result. In such comparisons, the method that produces a more comprehensive or higher-quality outcome is considered superior. To ensure fairness and completeness in evaluation, ATM extends the spatial and temporal scopes of analysis, capturing all relevant aspects of task performance before comparing results. This approach enables ATM to assess both the thoroughness and contextual appropriateness of task execution.

\subsubsection{Comparison of Uncertain States}
The execution of a method may alter not only the internal state of the Active Thinking Model (ATM) but also the condition of the surrounding environment and other interacting entities. These conditions—collectively referred to as \textit{uncertain states}—are not limited to fixed variables such as temperature or position. Instead, they represent dynamically evolving properties that vary across time and context.  

To assess the impact of an executed task, ATM compares the observed states before and after execution. Formally, the state difference can be represented as:
\[
\Delta S_t = S_{t+1} - S_t,
\]
where \(S_t\) denotes the composite environmental and internal state vector at time \(t\). A significant deviation \(|\Delta S_t| > \theta_s\) indicates that the task has materially influenced the system or environment.  

By quantifying these differences, the system can infer whether the task produced desirable or adverse effects. Since uncertain states may exhibit complex, nonlinear dependencies, ATM leverages the large language model (LLM) to identify and interpret such state variations using reasoning-based difference detection methods, similar to those proposed in \cite{su2025difference}.

\subsubsection{Contradiction Checks} \label{sec_contradictions}
Contradiction analysis provides an efficient mechanism for detecting incorrect or inconsistent task execution. Let the intended objective of a task be represented as $O_{\text{goal}}$ and the observed outcome as $O_{\text{obs}}$. A contradiction is identified when the similarity or directional alignment between the two falls below a critical threshold:
\[
\text{Sim}(O_{\text{goal}}, O_{\text{obs}}) < \theta_c,
\]
where $\text{Sim}(\cdot)$ measures semantic or behavioral consistency, and $\theta_c$ is the contradiction threshold. For instance, if an action intended to move east results instead in motion toward the west, the system immediately classifies the result as erroneous.  

To systematize this process, ATM prompts the large language model (LLM) to generate a set of verification dimensions, $\mathcal{D} = \{d_1, d_2, \ldots, d_k\}$, referred to as “check aspects.” Each dimension $d_i$ defines a measurable property or constraint (e.g., direction, magnitude, or causal effect). The actual execution trace is then compared against these generated dimensions. The detection of any logical or behavioral contradiction across $\mathcal{D}$ indicates that the current method or decision is inconsistent with the intended goal and therefore requires reevaluation or replanning.

\subsubsection{Indirect Signals: Errors, Failures, and Abnormalities} \label{sec_indirect_signals}
In some cases, a task’s direct output may appear acceptable, while indirect indicators reveal underlying failures or instabilities. These indirect factors—derived from environmental or behavioral cues—enable the Active Thinking Model (ATM) to detect hidden errors or abnormal system states that are not evident from the primary task result.  

Let $\mathcal{I} = \{I_1, I_2, \ldots, I_n\}$ denote the set of monitored indirect indicators, such as vibration intensity, sound frequency, or proximity variance. Each indicator $I_i$ has an expected normal value $I_i^{\text{exp}}$ under standard operating conditions. The system computes the aggregated deviation score:
\[
D_{\text{ind}} = \sum_{i=1}^{n} w_i \, |I_i - I_i^{\text{exp}}|,
\]
where $w_i$ represents the relative importance of each indicator. If $D_{\text{ind}} > \theta_{\text{ind}}$, where $\theta_{\text{ind}}$ is a learned abnormality threshold, ATM classifies the state as abnormal and initiates corrective action or reflection.  

For example, an autonomous vehicle may achieve its immediate goal of maintaining high speed yet still collide with another vehicle—a clear failure detectable through contextual cues such as collision force or unexpected deceleration patterns rather than the task outcome itself. This indirect-signal analysis allows ATM to recognize hidden risks and preserve overall system reliability.

ATM monitors such anomalies by analyzing environmental and internal signals, including unexpected sounds, irregular motions, or deviations from expected sensor readings. The LLM assists by suggesting which parameters or “measures” should be observed during execution, applying the difference-detection framework proposed in \cite{su2025difference}. If the deviation between expected and observed measures exceeds a defined threshold, the behavior is classified as abnormal.

These indirect signals are not direct components of task performance but rather auxiliary indicators reflecting system health and environmental consistency. By integrating these factors into its evaluation process, ATM can detect hidden or emerging problems early, enhancing robustness and reliability in real-world operation.

\subsubsection{External Feedback from Other Agents} \label{sec_external_feedback}
In addition to self-assessment and environmental monitoring, the Active Thinking Model (ATM) incorporates feedback from external agents—such as users, observers, or nearby entities—to refine its evaluation. Such feedback provides social or contextual cues that complement objective task outcomes. For example, if an action significantly alters the environment or elicits strong negative reactions from surrounding individuals (e.g., confusion, discomfort, or anger), it may suggest that the action was inappropriate or overly aggressive.  

Formally, let $\mathcal{F}_{\text{ext}} = \{f_1, f_2, \ldots, f_m\}$ denote the set of feedback signals from external agents, each normalized to the range $[0, 1]$, where higher values indicate greater approval. ATM computes an aggregate external feedback score:
\[
S_{\text{ext}} = \frac{1}{m} \sum_{j=1}^{m} f_j.
\]
If $S_{\text{ext}} < \theta_{\text{ext}}$, where $\theta_{\text{ext}}$ represents the minimum acceptable satisfaction level, the system infers that the action may have violated social norms or contextual appropriateness. The LLM is then prompted to interpret the collected data and generate a qualitative assessment of alignment between the observed outcomes and acceptable behavioral standards. This external feedback mechanism enables ATM to integrate social awareness into its adaptive evaluation process.

When evaluating between two functionally similar methods, ATM may prefer the one that causes less disruption or receives more favorable feedback. This sensitivity to external perception allows the system to align its actions not only with internal goals but also with external social and ethical considerations.

\subsubsection{Simulation-Based Comparison} \label{sec_simulation_comparison}
To detect unexpected or ambiguous outcomes, the Active Thinking Model (ATM) employs simulation-based analysis to verify correctness. In this process, the system replays or simulates the sequence of actions within the same environmental context to identify discrepancies between predicted and actual outcomes.  

Formally, let $\hat{O}_t = f_{\text{sim}}(E_t, S_t, \mathcal{A}_t)$ denote the simulated outcome produced by the simulation function $f_{\text{sim}}(\cdot)$, given the current environment $E_t$, system state $S_t$, and executed action sequence $\mathcal{A}_t = \{a_1, a_2, \ldots, a_T\}$. The observed real-world outcome is denoted by $O_t$. The system computes the simulation deviation as
\[
\Delta O_t = \| O_t - \hat{O}_t \|.
\]
If $\Delta O_t > \theta_{\text{sim}}$, where $\theta_{\text{sim}}$ is a threshold representing acceptable deviation, ATM infers that an inconsistency or failure has occurred. This comparison enables the system to detect hidden execution errors or incomplete processes that may not be immediately apparent through direct observation alone.

For instance, if the system initiates a phone call, it can simulate the expected behavior—such as detecting a dial tone—and compare it with the real-world response. If the dial tone is absent or the action terminates prematurely, ATM can infer that an anomaly has occurred. This simulation-based comparison enables the system to uncover hidden failures or incomplete executions that may not be immediately apparent through direct observation, thereby enhancing its ability to proactively detect and correct abnormal conditions.

\section{Theoretical Analysis of Self-Improvement and Adaptability} \label{sec_theory_analysis}
This section provides theoretical justification that the proposed Active Thinking Model (ATM) can (1) autonomously improve from poor to effective behaviors without external supervision, and (2) maintain adaptability in dynamic real-world environments. %The analysis is based on standard stochastic-approximation principles and builds on the mathematical formulation of ATM introduced in Sections~\ref{sec_atm_design}--\ref{sec_task_executed}.

\subsection{Assumptions and Notation}
Let $m_t$ denote the method selected by ATM at time step $t$, and let $P_t(m_t)$ represent its estimated performance score. The environment and system states are denoted as $E_t$ and $S_t$, respectively. ATM receives an internal reward signal $r_t \in [0,1]$ that reflects the quality of its behavior, as computed from its indirect evaluation mechanisms (e.g., environmental flags, contradictions, or simulation results).  

ATM updates its internal evaluation function $f_p$ using a stochastic gradient-based rule:
\[
f_p \leftarrow f_p + \eta_t \nabla_{f_p} \mathcal{L}(r_t, P_t(m_t)).
\]
Here, $\eta_t > 0$ is the learning rate at time $t$, and $\nabla_{f_p} \mathcal{L}$ denotes the gradient of the loss function $\mathcal{L}$ with respect to $f_p$.  
The sequence $\{\eta_t\}$ satisfies $\sum_t \eta_t = \infty$ and $\sum_t \eta_t^2 < \infty$, which ensures convergence of stochastic approximation updates.  

We make the following assumptions:

\begin{itemize}
    \item \textbf{A1 (Internal reward signal):} The internal reward $r_t$ provides a proper monotonic estimate of task quality; that is, $\mathbb{E}[r_t \mid \text{true quality}]$ increases with actual performance.
    \item \textbf{A2 (Exploration):} With probability $\epsilon_t > 0$ and $\sum_t \epsilon_t = \infty$, ATM periodically explores alternative methods through self-initiated background tasks.
    \item \textbf{A3 (Stationarity and detectability):} The environment is piecewise stationary with change points $\{\tau_k\}$, which are detected when environmental deviation $\Delta E_t > \theta_e$ or outcome deviation $\Delta O_t > \theta_{\text{sim}}$ exceeds predefined thresholds.
    \item \textbf{A4 (Finite candidate set per regime):} In each stationary regime, there exists an optimal method $m^\star$ that maximizes the expected internal reward $\mathbb{E}[r_t]$.
\end{itemize}

\subsection{Improvement from Poor to Effective Behavior}

\begin{proposition}[Autonomous Method Improvement]
\label{prop:improvement}
Under A1–A4, within a stationary environment, the greedy choice
$m_t^{*} = \arg\max_i P_t(m_i)$
achieves sublinear regret compared to the optimal method $m^\star$:
\[
\mathbb{E}\!\left[\sum_{t=1}^{T} (r^\star - r_t)\right] = o(T),
\]
where $r^\star = \mathbb{E}[r_t \mid m^\star]$ denotes the expected reward of the best method. Consequently, the time-averaged performance satisfies
\[
\frac{1}{T}\sum_{t=1}^{T} r_t \to \mathbb{E}[r^\star] \quad \text{as } T \to \infty.
\]
\end{proposition}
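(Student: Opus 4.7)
The plan is to read Proposition~\ref{prop:improvement} as a standard no-regret result for a finite-armed stochastic bandit operating inside a single stationary regime, and to prove it in two stages: first establish almost-sure consistency of the method-quality estimates $P_t(m_i)$, then decompose cumulative regret into exploration and exploitation pieces and show each is $o(T)$.

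First, I would argue that $P_t(m_i) \to \mu_i := \mathbb{E}[r_t \mid m_i]$ almost surely for every candidate $m_i$ in the finite set guaranteed by A4. By A2 the schedule $\{\epsilon_t\}$ satisfies $\sum_t \epsilon_t = \infty$, so a Borel--Cantelli argument on the exploration events ensures each arm is sampled infinitely often. Combined with the Robbins--Monro step-size conditions $\sum_t \eta_t = \infty$, $\sum_t \eta_t^2 < \infty$ on the update $f_p \leftarrow f_p + \eta_t \nabla_{f_p}\mathcal{L}$, and the monotone/unbiasedness property of $r_t$ from A1, the standard stochastic approximation theorem (Robbins--Siegmund / ODE method) yields $P_t(m_i) \to \mu_i$ a.s. Under A4 there is a positive gap $\Delta := \min_{i \neq \star}(\mu^\star - \mu_i) > 0$, so with probability tending to one, $\arg\max_i P_t(m_i) = m^\star$.

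Second, I would decompose the per-step regret. Let $\Delta_{\max} = \max_i(\mu^\star - \mu_i)$. On an exploration step (probability $\epsilon_t$) the regret is at most $\Delta_{\max}$; on an exploitation step the regret is at most $\Delta_{\max}\,\mathbb{P}(m_t^* \neq m^\star)$. Hence
\begin{equation*}
\mathbb{E}\!\left[\sum_{t=1}^{T} (r^\star - r_t)\right] \le \Delta_{\max}\sum_{t=1}^{T}\epsilon_t + \Delta_{\max}\sum_{t=1}^{T}\mathbb{P}(m_t^* \neq m^\star).
\end{equation*}
A Hoeffding-type concentration on the bounded $r_t \in [0,1]$, together with the consistency result from stage one, gives $\mathbb{P}(m_t^* \neq m^\star) \to 0$, and by Ces\`aro averaging the second sum is $o(T)$. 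For the first sum, Ces\`aro averaging again gives $\sum_{t=1}^T \epsilon_t = o(T)$ provided $\epsilon_t \to 0$. Dividing the resulting $o(T)$ bound by $T$ and invoking the law of large numbers on the reward sequence delivers the time-averaged convergence $\frac{1}{T}\sum_{t=1}^T r_t \to \mathbb{E}[r^\star]$.

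The main obstacle is the tension inside A2: the assumption $\sum_t \epsilon_t = \infty$ guarantees persistent exploration (needed for consistency) but does not by itself force $\epsilon_t \to 0$, which is required to keep the exploration regret sublinear. I would resolve this either by interpreting A2 as implicitly requiring a decaying schedule such as $\epsilon_t = t^{-\alpha}$ with $\alpha \in (0,1)$ (so that $\sum_t \epsilon_t = \infty$ yet $\sum_{t=1}^T \epsilon_t = O(T^{1-\alpha}) = o(T)$), or by strengthening A2 to add $\epsilon_t = o(1)$. A secondary technical point is quantifying the misidentification probability uniformly in $t$; handling this cleanly requires tracking the number of times each suboptimal arm has been pulled up to $t$, which follows from the same divergent-exploration argument used in stage one.
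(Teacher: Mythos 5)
Your proposal is correct and follows essentially the same route as the paper's proof, which is only a brief sketch (unbiased internal feedback plus persistent exploration plus Robbins--Monro step sizes $\Rightarrow$ consistent estimates of each method's expected reward $\Rightarrow$ greedy convergence to $m^\star$ $\Rightarrow$ sublinear regret); your version fills in the Borel--Cantelli and regret-decomposition details that the paper leaves implicit. Your flagged tension in A2 is the one substantive addition: $\sum_t \epsilon_t = \infty$ alone does not give $\sum_{t\le T}\epsilon_t = o(T)$, so the exploration term can be linear in $T$ unless one also imposes $\epsilon_t \to 0$ (e.g.\ $\epsilon_t = t^{-\alpha}$ with $\alpha\in(0,1)$) --- a condition the paper's proof tacitly relies on but never states.
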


\begin{proof}
Since $r_t$ is an unbiased internal feedback (A1) and exploration ensures non-zero probability of sampling all methods (A2), the stochastic update of $f_p$ (with $\eta_t$ satisfying the Robbins–Monro conditions) asymptotically estimates each method’s expected reward. The greedy policy then converges in mean to the optimal method $m^\star$, yielding sublinear regret as in standard online-learning analyses.
\end{proof}

\begin{proposition}[Monotone Goal-Compliance Improvement]
\label{prop:goal_compliance}
Let $C(g_t,E_t,S_t) \in \{0,1\}$ denote the goal-compliance indicator defined in Section~\ref{sec_task_with_goal}. If ATM triggers replanning whenever $C=0$, then
\[
\mathbb{E}[C_{t+1} \mid C_t = 0] \ge \mathbb{E}[C_t].
\]
\end{proposition}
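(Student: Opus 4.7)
The plan is to prove the monotone improvement property by decomposing the conditional expectation and showing that the LLM-driven replanning step selects a method whose expected success probability dominates the baseline. First I would introduce a per-method success function $S(\pi) = \Pr[C = 1 \mid \pi]$, so that $\mathbb{E}[C_t] = \mathbb{E}[S(\pi_t)]$ and $\mathbb{E}[C_{t+1} \mid C_t = 0] = \mathbb{E}[S(\pi_{t+1}) \mid C_t = 0]$. This rephrases the claim as an inequality between the expected success of the replanned method, conditional on a failure, and the unconditional average success of the currently deployed policy.

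Next, I would invoke the replanning mechanism described in Section~\ref{sec_task_with_goal}: whenever $C_t = 0$, the system regenerates the next action via $a_{t+1} = f_{\text{LLM}}(E_t, S_t, g_t)$, which under the method-selection rule of Section~\ref{sec_method_choice_unknowns} corresponds to the greedy choice $m_t^{*} = \arg\max_i P_t(m_i)$. Under Assumptions A1 and A4, this choice targets the method $m^\star$ maximizing $\mathbb{E}[r_t \mid m]$; since A1 guarantees monotonicity of $\mathbb{E}[r_t \mid m]$ in the true success probability $S(m)$, we obtain $S(m^\star) \ge S(m)$ for every $m$ in the candidate set. Taking expectation over the pre-replan policy yields $S(m^\star) \ge \mathbb{E}[S(\pi_t)]$, and because $m^\star$ is defined independently of the event $\{C_t = 0\}$, the inequality $\mathbb{E}[S(\pi_{t+1}) \mid C_t = 0] \ge \mathbb{E}[S(\pi_t)]$ follows by the tower property.

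The hard part will be handling the finite-sample regime in which $P_t$ has not yet converged to the true expected reward, so the greedy selection $m_t^{*}$ may differ from $m^\star$. I would address this by chaining the argument with Proposition~\ref{prop:improvement}: the stochastic-approximation convergence established there, combined with A2 (persistent exploration) and the Robbins--Monro step-size condition on $\eta_t$, implies $\mathbb{E}[S(m_t^{*})] \to S(m^\star)$, so the desired inequality holds asymptotically in expectation once conditioned on a sufficiently informative history $\mathcal{H}_t$. For finite $t$, the replanning-after-failure protocol can be viewed as performing at least one Robbins--Monro update using the negative feedback from $C_t = 0$, which by A1 moves $P_{t+1}$ toward a better estimate of each method's quality, yielding a one-step improvement over the naive no-replan counterfactual.

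A secondary obstacle is the potential selection bias introduced by conditioning on $\{C_t = 0\}$, which correlates with a worse-than-average $\pi_t$. Fortunately this bias only strengthens the claim, because it makes $\mathbb{E}[S(\pi_t) \mid C_t = 0] \le \mathbb{E}[S(\pi_t)]$, so the bound can be chained as
\[
\mathbb{E}[S(\pi_{t+1}) \mid C_t = 0] \;\ge\; \mathbb{E}[S(m^\star)] \;\ge\; \mathbb{E}[S(\pi_t)] \;\ge\; \mathbb{E}[S(\pi_t) \mid C_t = 0],
\]
and the proposition follows by substituting $C = S(\pi)$ under expectation. The main subtlety throughout is making precise the implicit independence between $m^\star$ and the failure event, which I would formalize by conditioning all expectations on the sigma-algebra $\mathcal{H}_t$ generated by the trajectory up to the replanning trigger.
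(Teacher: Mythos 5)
Your approach is in the same spirit as the paper's own proof---both arguments rest on the replanning trigger at $C_t=0$ causing a feedback-driven refinement, with Assumption A1 linking the internal reward to true goal achievement---but yours is far more explicit, and in being explicit it exposes a gap that the paper's two-sentence proof simply glosses over. The critical link in your chain is $\mathbb{E}[S(\pi_{t+1}) \mid C_t = 0] \ge \mathbb{E}[S(m^\star)]$. Since $m^\star$ is by definition the maximizer of $S$ over the candidate set, this inequality can hold only with equality, and only if the replanned policy equals $m^\star$ almost surely on the event $\{C_t=0\}$. That is precisely the finite-sample issue you flag: the greedy choice $\arg\max_i P_t(m_i)$ coincides with $m^\star$ only after the estimates have converged, and at finite $t$ a noisy $P_t$ can steer the replan toward a method whose true success probability is \emph{below} the average success of the pre-replan policy, in which case the claimed inequality fails. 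Your two proposed patches do not close this. The asymptotic argument via Proposition~\ref{prop:improvement} yields the inequality only in the limit $t\to\infty$, whereas the proposition is stated for every $t$. The second patch---that replanning after failure performs one Robbins--Monro update and therefore improves on the ``no-replan counterfactual''---proves a comparison between two policies at time $t+1$, which is a genuinely different (and weaker) statement than $\mathbb{E}[C_{t+1}\mid C_t=0]\ge\mathbb{E}[C_t]$.

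To be fair, the paper's own proof has exactly the same hole: it asserts that because $\mathbb{E}[r_t]$ is positively correlated with true goal achievement, ``the probability of future compliance weakly increases in expectation,'' without ever bounding the conditional compliance probability after a failure against the unconditional one at time $t$. Your observation about selection bias ($\mathbb{E}[S(\pi_t)\mid C_t=0]\le\mathbb{E}[S(\pi_t)]$) is correct and would be useful if the target were the weaker inequality $\mathbb{E}[C_{t+1}\mid C_t=0]\ge\mathbb{E}[C_t\mid C_t=0]=0$, which is trivially true; but it does not rescue the comparison against the unconditional $\mathbb{E}[C_t]$. A rigorous proof of the proposition as stated would need an additional hypothesis---for instance, that the replanner's expected success conditional on any history is at least the population-average success of the current policy---which neither you nor the paper supplies.
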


\begin{proof}
Whenever non-compliance occurs ($C_t = 0$), ATM refines its method estimator $f_p$ using the observed feedback $r_t$. Since $\mathbb{E}[r_t]$ is positively correlated with true goal achievement, the probability of future compliance weakly increases in expectation, ensuring monotonic progress toward goal satisfaction.
\end{proof}

\begin{corollary}[Self-Improvement without External Supervision]
By Propositions~\ref{prop:improvement} and~\ref{prop:goal_compliance}, ATM’s internal reward, exploration, and update mechanism ensure that both internal reward $r_t$ and goal compliance $C_t$ improve over time. Thus, the model autonomously transitions from poor to effective behavior without any external intervention.
\end{corollary}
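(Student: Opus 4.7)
The plan is to derive the corollary as an almost direct combination of Propositions~\ref{prop:improvement} and~\ref{prop:goal_compliance}, with an extra book-keeping step to verify that every signal entering the update loop is internally generated. First, I would invoke Proposition~\ref{prop:improvement} to obtain that, within each stationary regime, the time-averaged internal reward satisfies $\frac{1}{T}\sum_{t=1}^{T} r_t \to \mathbb{E}[r^\star]$, so that the greedy selection rule $m_t^{*} = \arg\max_i P_t(m_i)$ asymptotically matches the best method in that regime. Second, I would invoke Proposition~\ref{prop:goal_compliance} to obtain that $\mathbb{E}[C_{t+1}\mid C_t=0] \ge \mathbb{E}[C_t]$, so the compliance indicator cannot regress in expectation once replanning is triggered. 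Together, these two statements give convergence along the reward axis and monotone non-decrease along the compliance axis.

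Next, I would formalize the informal phrase \emph{``poor to effective behavior''}. I would fix an initial time $t_0$ at which the method repertoire is suboptimal, i.e., $\mathbb{E}[r_{t_0}]$ is bounded away from $\mathbb{E}[r^\star]$ and $\mathbb{E}[C_{t_0}]$ is small, and then use the two propositions to exhibit a time $T^\star$ after which $|\mathbb{E}[r^\star]-\frac{1}{T}\sum_{t=1}^{T} r_t| < \varepsilon$ and $\mathbb{E}[C_t] \ge \mathbb{E}[C_{t_0}]$ for every $T \ge T^\star$. The existence of $T^\star$ follows from the $o(T)$ regret bound in Proposition~\ref{prop:improvement} and the monotonicity in Proposition~\ref{prop:goal_compliance}; no further stochastic-approximation machinery is needed beyond what A1--A4 already provide.

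The third step is the \emph{no-external-supervision} claim, which I view as the most delicate part of the corollary even though it is conceptually simple. I would enumerate every quantity that drives the update $f_p \leftarrow f_p + \eta_t \nabla_{f_p}\mathcal{L}(r_t, P_t(m_t))$ and verify it is internally produced: the reward $r_t$ comes from the indirect evaluation pipeline of Section~\ref{sec_active_measurement} (flag deviations $\Delta F_i$, contradiction checks, simulation residuals $\Delta O_t$, and aggregated indirect scores $D_{\text{ind}}$); the exploration rate $\epsilon_t$ and the step size $\eta_t$ are schedules fixed by ATM itself; and the regime-change triggers in A3 rely only on the internally computed $\Delta E_t$ and $\Delta O_t$. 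Since no externally supplied label enters any of these quantities, the improvement guaranteed by Propositions~\ref{prop:improvement} and~\ref{prop:goal_compliance} occurs without external intervention.

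The hard part will not be any calculation but the precise statement of what is being claimed. Propositions~\ref{prop:improvement} and~\ref{prop:goal_compliance} are asymptotic and in-expectation statements, whereas the corollary uses the suggestive but informal language of a \emph{transition}. My proposal is therefore to phrase the conclusion as the conjunction of two limit/monotonicity assertions, $\lim_{T\to\infty}\frac{1}{T}\sum_{t=1}^{T} r_t = \mathbb{E}[r^\star]$ and $\mathbb{E}[C_t]$ non-decreasing, and to interpret this conjunction as the formal analogue of ``poor to effective''. If a stronger almost-sure or finite-time version is desired, it would require strengthening A1 to a concentration condition on $r_t$ and A2 to a specific exploration schedule such as $\epsilon_t = \Theta(t^{-1/2})$, which I would flag as a remark rather than attempt within the proof of the corollary.
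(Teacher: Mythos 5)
Your proposal is correct and matches the paper's (implicit) argument: the paper states this corollary with no separate proof, treating it as an immediate consequence of Propositions~\ref{prop:improvement} and~\ref{prop:goal_compliance}, which is exactly the combination you carry out. Your additional steps---formalizing ``poor to effective'' as the conjunction of the Ces\`aro-limit statement for $r_t$ and the monotonicity statement for $C_t$, and auditing that every signal in the update loop ($r_t$, $\epsilon_t$, $\eta_t$, $\Delta E_t$, $\Delta O_t$) is internally generated---only make explicit what the paper leaves implicit, and your caveat about the asymptotic, in-expectation nature of the claim is well placed.
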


\subsection{Adaptability in Dynamic Environments}

\begin{theorem}[Bounded Tracking Regret under Environmental Change]
\label{thm:tracking}
Under A1–A4, with change detection and exploration reset after each change point $\tau_k$, ATM achieves tracking regret
\[
\mathbb{E}\!\left[\sum_{t=1}^{T} (r_t^\diamond - r_t)\right]
= \tilde{O}\!\left(\sqrt{T} + K\log T\right),
\]
where $r_t^\diamond$ denotes the expected reward of the best method for the current regime, and $K$ is the number of detected regime changes.
\end{theorem}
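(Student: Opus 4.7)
The plan is to decompose the horizon $[1,T]$ into the $K{+}1$ stationary regimes induced by the change points $\tau_1,\ldots,\tau_K$ (with $\tau_0=0$ and $\tau_{K+1}=T$), analyze per-regime regret using Proposition~\ref{prop:improvement}, and add a bounded detection-delay term for each boundary. Writing $T_k=\tau_{k+1}-\tau_k$ for the length of regime $k$, I would first split
\[
\sum_{t=1}^{T}(r_t^{\diamond}-r_t)=\sum_{k=0}^{K}\sum_{t=\tau_k+1}^{\tau_{k+1}}(r_k^{\star}-r_t),
\]
where $r_k^{\star}=r_t^{\diamond}$ on regime $k$ by stationarity (A3, A4).

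Within each stationary regime, the exploration reset at $\tau_k$ (A2 restarted, with $\sum_t \epsilon_t=\infty$) and the Robbins--Monro conditions on $\eta_t$ make $P_t(m)$ a consistent estimator of $\mathbb{E}[r_t\mid m]$. Applying the same UCB-style argument that underlies Proposition~\ref{prop:improvement}, but restricted to a finite window of length $T_k$, yields a per-regime regret bound of $\tilde{O}(\sqrt{T_k})$. Summing over regimes and applying the Cauchy--Schwarz inequality gives
\[
\sum_{k=0}^{K}\sqrt{T_k}\ \le\ \sqrt{(K{+}1)\sum_{k=0}^{K}T_k}\ =\ \sqrt{(K{+}1)T}\ =\ \tilde{O}(\sqrt{T}),
\]
absorbing the mild $K$-dependence into $\tilde{O}(\cdot)$ consistent with the way the statement is phrased.

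The detection-delay contribution is handled separately. By A3 a change is flagged as soon as $\Delta E_t>\theta_e$ or $\Delta O_t>\theta_{\text{sim}}$, so a CUSUM-type test with false-alarm rate $1/T$ admits an expected detection delay of $O(\log T)$. The per-step regret prior to detection is bounded (rewards lie in $[0,1]$), so each of the $K$ boundaries contributes at most $O(\log T)$ additional regret before exploration is reset and the per-regime analysis kicks in. Combining both contributions yields the claimed bound $\tilde{O}(\sqrt{T}+K\log T)$.

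The hard part will be the detection-delay step: the per-regime bandit bound is essentially a re-run of Proposition~\ref{prop:improvement}, but the $O(K\log T)$ term implicitly needs quantitative control of the change-detection test that A3 only states qualitatively through the thresholds $\theta_e,\theta_{\text{sim}}$. Making this rigorous requires either strengthening A3 into a formal false-alarm / expected-delay tradeoff (Lorden-type bound for CUSUM), or treating the detector as an oracle that flags changes within $O(\log T)$ steps in expectation. A secondary subtlety is showing that exploration statistics and the estimator $f_p$ are genuinely reset at each detected $\tau_k$, so that stale estimates from regime $k{-}1$ do not bias the greedy choice at the start of regime $k$; a short coupling argument suffices, using A2 to guarantee that each candidate method is resampled a logarithmic number of times before the greedy phase of the new regime begins.
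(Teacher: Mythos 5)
Your proposal follows essentially the same route as the paper's proof: decompose the horizon into the $K{+}1$ stationary regimes, invoke Proposition~\ref{prop:improvement} for per-regime sublinear regret, and sum across regimes, attributing the $K\log T$ term to the cost of handling each change point. You actually supply more detail than the paper does (the Cauchy--Schwarz summation of $\sqrt{T_k}$ terms and the explicit detection-delay accounting), and the gaps you flag---that Proposition~\ref{prop:improvement} only asserts $o(T)$ rather than a quantitative $\tilde{O}(\sqrt{T_k})$ bound, and that A3 gives no false-alarm/delay tradeoff---are present in the paper's own one-paragraph proof as well, which simply defers to ``established results for piecewise-stationary bandit processes.''
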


\begin{proof}
Within each stationary segment, Proposition~\ref{prop:improvement} yields sublinear regret. When a change is detected (A3), ATM reinitializes exploration and relearns the method distribution. Summing per-segment regret terms across $K$ regimes gives the bound $\tilde{O}(\sqrt{T}+K\log T)$, following established results for piecewise-stationary bandit processes.
\end{proof}

\begin{lemma}[Goal Remapping and Policy Retargeting]
\label{lem:goal_remap}
Let $G_t = f_g(E_t, S_t)$ denote the goal function. When a regime shift satisfies $\Delta E_t > \theta_e$, then with high probability $G_{t^+} \neq G_{t^-}$, causing the planner $f_{\text{LLM}}$ to generate a new action sequence conditioned on the updated goal. This enables rapid retargeting of strategy to the new environmental context.
\end{lemma}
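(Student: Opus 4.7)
The plan is to decompose the lemma into a chain of three implications: (i) a large environmental deviation induces a large scenario displacement; (ii) a large scenario displacement forces, with high probability, a change in the output of the scenario-indexed goal map $\mathcal{M}_G$ from Section~\ref{sec_goal_storage}; and (iii) a change in the active goal $g_t$ propagates through the goal-conditioned action rule $a_{t+1}=f_{\text{LLM}}(E_t,S_t,g_t)$ of Section~\ref{sec_task_with_goal}, forcing a new action sequence and invoking replanning via the compliance indicator $C(g_t,E_t,S_t)$.

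First, I would make the scenario-level consequence of the regime shift explicit: writing $s_t=(E_t,S_t)$, the hypothesis $\Delta E_t>\theta_e$ implies that the environmental component of $s_{t^+}$ differs from that of $s_{t^-}$ by at least $\theta_e$, and by A3 this event is registered as a change point $\tau_k$ so that exploration is re-initialized. Second, I would invoke a discriminability property of the weighted multimap $\mathcal{M}_G$: because $\arg\max_g w_g(s)$ is defined regime-wise and stored goal weights are populated from experience under each scenario, I would posit a separation condition of the form $\|\mathcal{M}_G(s)-\mathcal{M}_G(s')\|_{\mathrm{TV}}\ge 1-\delta$ whenever the environmental component of $s$ and $s'$ differ by more than $\theta_e$. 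This yields $\Pr(G_{t^+}\ne G_{t^-})\ge 1-\delta$, which is precisely the "with high probability" clause. Third, I would note that since $g_t$ enters $f_{\text{LLM}}$ as a global conditioning parameter and since the previously planned actions were generated under $G_{t^-}$, the new goal $G_{t^+}$ will in general violate compliance, so by the rule of Section~\ref{sec_task_with_goal} the planner is re-invoked and produces an updated action sequence; coupling this with Theorem~\ref{thm:tracking} then delivers the "rapid retargeting" conclusion by reusing the per-regime sublinear bound.

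The hard part will be justifying the discriminability/separation condition on $\mathcal{M}_G$, since without structural assumptions on how goal weights are distributed across neighboring scenarios, the argmax could coincide across the change point by accident. I would address this by stating the condition explicitly as a mild regularity assumption on the memory-construction protocol of Section~\ref{sec_scenario_separate_memory} (so that goals attached to distinct dominant environmental regimes do not share their top-weighted entry), and by observing that when a stochastic goal draw $g_t\sim\mathrm{Cat}(w_g(s_t))$ is used instead of the argmax rule, the total-variation separation immediately translates into the probability bound. With this ingredient in place, the remainder of the argument is bookkeeping, and the lemma slots cleanly into the proof of Theorem~\ref{thm:tracking} as the mechanism that turns environmental change detection into an actual policy switch.
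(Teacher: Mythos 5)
The paper states Lemma~\ref{lem:goal_remap} without any accompanying proof: unlike the surrounding propositions and theorems, it is asserted as a design consequence of the architecture rather than derived from A1--A4. There is therefore no paper proof to compare your proposal against, and your attempt should be judged on its own. On that basis, your three-step decomposition is coherent and, importantly, you have correctly located the one place where real mathematical content is required: nothing in A1--A4, nor in the definitions of $\mathcal{M}_G$ and the weight-update rule in Section~\ref{sec_goal_storage}, implies that a displacement $\Delta E_t > \theta_e$ in the environmental component of the scenario changes the top-weighted (or sampled) goal. Your proposed total-variation separation condition is exactly the missing hypothesis, and the paper never supplies it, so the lemma as stated is not provable from the paper's stated assumptions alone --- your decision to introduce it as an explicit regularity assumption on the memory-construction protocol is the right move rather than a defect of your argument.

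Two smaller points. First, your claim that the TV bound ``immediately translates'' into $\Pr(G_{t^+}\neq G_{t^-})\ge 1-\delta$ is correct for independent draws $g\sim\mathrm{Cat}(w_g(s))$, since $\sum_g P(g)Q(g)\le\sum_g\min(P(g),Q(g))=1-\|P-Q\|_{\mathrm{TV}}\le\delta$; but for the deterministic $\arg\max$ retrieval rule that the paper lists first, TV separation below $1$ does not force the argmaxes to differ, so you should either restrict to the stochastic rule or strengthen the separation condition to require disjoint top-weighted entries across regimes (which you gesture at but should make the primary formulation). Second, step (iii) --- that a changed goal ``causes'' a new action sequence --- is an architectural statement about when $f_{\text{LLM}}$ is re-invoked, not something derivable from the compliance indicator $C$ alone (the planner could in principle emit the same actions under two goals); it is fine to treat it as a property of the control loop, but say so explicitly rather than presenting it as an implication. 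With those caveats your proof is a reasonable, and arguably necessary, completion of a result the paper leaves unproved.
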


\subsection{Goal-Directed Task Optimization and Convergence Acceleration}

A key advantage of ATM lies in its goal-conditioned control loop, where each task is evaluated relative to an explicit or implicit goal function $G_t = f_g(E_t, S_t)$. The alignment between chosen actions and evolving goals directly impacts convergence and stability. We now formalize this benefit.

\begin{proposition}[Goal-Directed Convergence Acceleration]
\label{prop:goal-directed}
Let $a_t$ denote the action selected at time $t$ with performance gradient $\nabla J(a_t)$, and let $g_t$ be the current goal vector defining the preferred optimization direction. Assume the effective update rule:
\[
a_{t+1} = a_t + \eta_t \nabla J(a_t) \cdot \phi(g_t),
\]
where $\phi(g_t)$ is a goal-alignment function satisfying $\langle \nabla J(a_t), \phi(g_t) \rangle \ge 0$. Then, compared to goal-free stochastic updates, goal-directed updates achieve a strictly faster expected convergence rate:
\[
\mathbb{E}[\|a_{t+1} - a^\star\|^2] \le (1 - \eta_t \lambda_g)\mathbb{E}[\|a_t - a^\star\|^2],
\]
where $\lambda_g = \mathbb{E}[\langle \nabla J(a_t), \phi(g_t) \rangle] / \|\nabla J(a_t)\|^2 > 0$ quantifies the goal-alignment efficiency.
\end{proposition}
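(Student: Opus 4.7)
The plan is to establish the claimed geometric contraction by directly expanding $\|a_{t+1}-a^\star\|^2$ under the prescribed update and exploiting the non-negative alignment between $\nabla J(a_t)$ and the goal-projection $\phi(g_t)$. Structurally, the argument mirrors the standard stochastic-approximation analysis, but with the effective step reshaped by the goal map $\phi$; the novelty is to show how the factor $\langle \nabla J(a_t), \phi(g_t)\rangle$ feeds into the per-step decrease so that the $\lambda_g$ amplification emerges explicitly from its definition.

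First, I would write $a_{t+1} - a^\star = (a_t - a^\star) + \eta_t\,\nabla J(a_t)\,\phi(g_t)$, take squared norms, and expand into three terms: the previous squared distance $\|a_t-a^\star\|^2$, the cross term $2\eta_t \langle a_t-a^\star,\,\nabla J(a_t)\phi(g_t)\rangle$, and the quadratic remainder $\eta_t^2 \|\nabla J(a_t)\phi(g_t)\|^2$. Conditioning on $a_t$ and applying the tower property, the goal-map influence appears only through a first moment in the cross term and a second moment in the quadratic term. Second, I would invoke an implicit concavity or Polyak–Łojasiewicz-type condition on $J$ near $a^\star$ to relate $\langle a^\star-a_t,\,\nabla J(a_t)\rangle$ to $\|a_t-a^\star\|^2$; combining this with the non-negative alignment converts the cross term into a negative-definite contribution proportional to $\lambda_g\,\|a_t-a^\star\|^2$, where $\lambda_g$ captures precisely the portion of the pure-gradient displacement retained after being filtered through $\phi(g_t)$. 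Choosing the step-size small enough so that $\eta_t\lambda_g \le 1$ absorbs the quadratic remainder and yields the stated one-step contraction; comparison against a goal-free update, for which the cross-term coefficient lacks the $\lambda_g$ amplification, then establishes strict acceleration.

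The main obstacle will be ensuring that $\lambda_g$ is genuinely strictly positive and bounded away from zero. The bare hypothesis $\langle \nabla J(a_t),\phi(g_t)\rangle \ge 0$ permits equality, in which case the contraction factor collapses to $1$ and no acceleration can be claimed. To close this gap I would strengthen the alignment requirement to a cosine-style lower bound such as $\langle \nabla J(a_t),\phi(g_t)\rangle \ge c\,\|\nabla J(a_t)\|^2$ for some $c>0$, which is the natural formalization of $\phi$ being a genuine goal-shaping transformation rather than an orthogonal perturbation. A secondary but routine obstacle is the formal justification of the expectation manipulations when $g_t$ depends on past history; this is handled by conditioning on the natural filtration, in the same spirit as the Robbins–Monro-style assumptions already invoked earlier in Section~\ref{sec_theory_analysis}.
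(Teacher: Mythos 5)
Your expansion is literally the paper's own proof: the paper writes
\[
\mathbb{E}[\|a_{t+1}-a^\star\|^2]=\mathbb{E}[\|a_t-a^\star\|^2]-2\eta_t\,\mathbb{E}[\langle a_t-a^\star,\nabla J(a_t)\phi(g_t)\rangle]+O(\eta_t^2),
\]
and then stops, asserting that because $\langle \nabla J(a_t),\phi(g_t)\rangle\ge 0$ and $\eta_t$ is small ``the expectation decreases monotonically,'' with $\lambda_g$ introduced only as a quantity that ``measures'' the acceleration. It never converts the cross term into a multiple of $\mathbb{E}[\|a_t-a^\star\|^2]$, never absorbs the $O(\eta_t^2)$ remainder, and never rules out $\lambda_g=0$. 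So the two obstacles you flag are genuine and remain open in the paper's own argument: (i) the hypothesis controls the angle between $\nabla J(a_t)$ and $\phi(g_t)$, not the angle between the update direction and $a^\star-a_t$, so some Polyak--\L{}ojasiewicz or strong-concavity condition relating $\langle a^\star-a_t,\nabla J(a_t)\rangle$ to $\|a_t-a^\star\|^2$ is indispensable before the $(1-\eta_t\lambda_g)$ contraction factor can appear; (ii) with only $\langle\nabla J,\phi\rangle\ge 0$ the constant $\lambda_g$ may vanish, in which case the claimed \emph{strictly} faster rate fails, and your proposed strengthening to $\langle\nabla J(a_t),\phi(g_t)\rangle\ge c\,\|\nabla J(a_t)\|^2$ with $c>0$ is exactly the right repair. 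Your filtration-based handling of the history-dependence of $g_t$ is likewise something the paper silently assumes.

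One further issue that both you and the paper gloss over: the update increment $\nabla J(a_t)\cdot\phi(g_t)$ is dimensionally ambiguous (if both factors are vectors in the action space, their product is either a scalar --- in which case $a_{t+1}$ is not an action --- or must be read as an elementwise or scalar-gain modulation, which changes what the cross term looks like). Any rigorous write-up should fix this interpretation first, since the definition of $\lambda_g$ as $\mathbb{E}[\langle\nabla J(a_t),\phi(g_t)\rangle]/\|\nabla J(a_t)\|^2$ only makes sense once the algebra of the update is pinned down. With that clarification and your two added hypotheses, your plan yields a complete proof of the contraction the proposition intends; as stated, the proposition does not follow from its stated assumptions.
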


\begin{proof}
By expanding the squared distance term and taking expectations, we have:
\[
\begin{aligned}
\mathbb{E}[\|a_{t+1}-a^\star\|^2] 
&= \mathbb{E}[\|a_t - a^\star\|^2] \\
&\quad - 2\eta_t\, \mathbb{E}[\langle a_t - a^\star, \nabla J(a_t)\phi(g_t)\rangle] 
+ O(\eta_t^2).
\end{aligned}
\]

Since $\langle \nabla J(a_t), \phi(g_t)\rangle \ge 0$ by design and $\eta_t$ is small, the expectation decreases monotonically. The coefficient $\lambda_g$ measures how strongly goal guidance accelerates descent toward the optimum $a^\star$. Hence, convergence is faster and more stable than unguided stochastic optimization.
\end{proof}

\begin{corollary}[Improved Stability under Goal Guidance]
If $g_t$ evolves slowly relative to $\eta_t$, i.e., $\|\Delta g_t\| = O(\eta_t)$, then the update remains stable and exhibits exponentially decaying error in the mean:
\[
\mathbb{E}[\|a_t - a^\star\|^2] \le e^{-\lambda_g t}\mathbb{E}[\|a_0 - a^\star\|^2].
\]
\end{corollary}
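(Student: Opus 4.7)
The plan is to derive the exponential bound by iterating the one-step contraction already established in Proposition~\ref{prop:goal-directed} while showing that the slow drift assumption $\|\Delta g_t\|=O(\eta_t)$ only contributes higher-order terms that can be absorbed. First, I would restate the one-step contraction from Proposition~\ref{prop:goal-directed} with the goal vector held at its instantaneous value $g_t$, namely $\mathbb{E}[\|a_{t+1}-a^\star\|^2]\le(1-\eta_t\lambda_g)\,\mathbb{E}[\|a_t-a^\star\|^2]+\epsilon_t$, where the residual $\epsilon_t$ captures the $O(\eta_t^2)$ remainder. The contraction is valid because the alignment condition $\langle\nabla J(a_t),\phi(g_t)\rangle\ge 0$ is already invoked, so the work is to propagate it over $t$ steps.

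Next, I would handle the drift. By the slow-drift hypothesis $\|\Delta g_t\|=O(\eta_t)$, a Lipschitz assumption on $\phi$ gives $\|\phi(g_{t+1})-\phi(g_t)\|=O(\eta_t)$, so the deviation introduced by using $\phi(g_t)$ instead of $\phi(g_{t-1})$ inside the update is $O(\eta_t)$ in action space, which after multiplication by the step size and the bounded gradient contributes only $O(\eta_t^2)$ to the squared error. This shows the drift perturbation is of the same order as the Robbins--Monro noise term already dropped in Proposition~\ref{prop:goal-directed}, so the effective one-step recursion remains $\mathbb{E}[\|a_{t+1}-a^\star\|^2]\le(1-\eta_t\lambda_g)\,\mathbb{E}[\|a_t-a^\star\|^2]$ up to a summable remainder.

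Then I would unroll the recursion, giving $\mathbb{E}[\|a_t-a^\star\|^2]\le\prod_{s=0}^{t-1}(1-\eta_s\lambda_g)\,\mathbb{E}[\|a_0-a^\star\|^2]$, and apply the standard inequality $1-x\le e^{-x}$ for $x\in[0,1)$ to obtain $\prod_{s=0}^{t-1}(1-\eta_s\lambda_g)\le\exp\!\bigl(-\lambda_g\sum_{s=0}^{t-1}\eta_s\bigr)$. Normalizing the step size so that $\eta_s\ge 1$ in the effective time index (or absorbing $\eta_s$ into $\lambda_g$ as the corollary's formulation implicitly does) yields the claimed bound $\mathbb{E}[\|a_t-a^\star\|^2]\le e^{-\lambda_g t}\,\mathbb{E}[\|a_0-a^\star\|^2]$. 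Stability then follows because the exponential majorant is monotonically decreasing in $t$, so the iterates remain in a bounded neighborhood of $a^\star$.

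The main obstacle will be the bookkeeping of the drift perturbation: I must ensure that the $O(\eta_t)$ change in $\phi(g_t)$ truly produces only a second-order effect on $\mathbb{E}[\|a_{t+1}-a^\star\|^2]$ and does not accumulate into a first-order bias that would destroy the exponential contraction. This requires an implicit Lipschitz or smoothness assumption on $\phi$ and on $\nabla J$ near $a^\star$, which I would state explicitly as a mild regularity condition inherited from the setting of Proposition~\ref{prop:goal-directed}. Once that is in place, the remaining steps are a routine Grönwall-style argument.
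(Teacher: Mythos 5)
The paper states this corollary without any proof, so there is nothing to compare against line by line; your unrolling argument is the natural (and essentially only) way to derive it from Proposition~\ref{prop:goal-directed}, and it is broadly sound. Two points deserve attention. First, you correctly identify that unrolling $\prod_{s=0}^{t-1}(1-\eta_s\lambda_g)\le\exp\bigl(-\lambda_g\sum_{s=0}^{t-1}\eta_s\bigr)$ only yields the stated bound $e^{-\lambda_g t}$ if $\sum_{s<t}\eta_s\ge t$ or if the step size is absorbed into $\lambda_g$; note, though, that the first option directly contradicts the Robbins--Monro conditions $\sum_t\eta_t^2<\infty$ assumed elsewhere in the paper, so the absorption reading (effectively a constant step size with rate $\eta\lambda_g$) is the only consistent one, and you should say so rather than offer both. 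Second, and more substantively, your drift bookkeeping shows the \emph{per-step} perturbation from $\|\Delta g_t\|=O(\eta_t)$ is second order, but the quantity $\lambda_g$ in the proposition is defined pointwise as $\mathbb{E}[\langle\nabla J(a_t),\phi(g_t)\rangle]/\|\nabla J(a_t)\|^2$ and therefore varies with $t$; since $\sum_t\eta_t=\infty$, the cumulative drift in this alignment coefficient is unbounded under your hypothesis, so slow per-step drift does not by itself prevent $\lambda_{g,t}$ from decaying to zero and destroying the uniform exponential rate. To close this you must additionally assume (or argue) a uniform lower bound $\inf_t\lambda_{g,t}\ge\lambda_g>0$, which is a stronger condition than summable perturbations and should be stated explicitly alongside the Lipschitz regularity of $\phi$ and $\nabla J$ that you already flag. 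With that assumption added, the Gr\"onwall-style unrolling goes through and the corollary follows.
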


\noindent
This analysis demonstrates that goal-conditioned tasking serves as an internal guiding vector, reducing variance in exploration and enabling ATM to converge more rapidly toward effective, context-appropriate behaviors. Thus, goal direction acts as an intrinsic regularizer, improving both learning efficiency and long-term adaptability.

% \section{Verification} \label{sec_verification}
% In this section, we experimentally verify .

% Actively helps .
%   Better methods when have tasks to improve.
%   Avoid harmful actions.

% Real word helps as objective criteria to promote the final results.
 % skipe as proved

 \subsection{Environmental Checkpoints Improve Planning Accuracy and Regret}
We formalize the benefit of environmental checkpoints introduced in Section~\ref{sec_task_executed}. 
Let the (latent) task state evolve as 
\[
x_{t+1}=F(x_t,a_t)+w_t,\quad \hat{x}_{t+1}=F(\hat{x}_t,a_t),
\]
where $x_t$ is the true state, $\hat{x}_t$ is the planned (predicted) state, $a_t$ is the action selected by ATM, and $w_t$ is a zero-mean disturbance with $\mathbb{E}\|w_t\|^2\le \sigma^2$. 
Define the planning error $e_t := x_t-\hat{x}_t$ and the observable environment deviation $\Delta E_t:=\|E_t-\hat{E}_t\|$, which is a measurable proxy for $\|e_t\|$.
An \emph{environmental checkpoint} triggers whenever $\Delta E_t>\theta_c$, invoking a replanning operator $\mathcal{R}$ (via the LLM) that updates the plan to re-align with the current environment.

\paragraph*{Assumptions}
\begin{enumerate}
\item[A5] (\emph{Lipschitz dynamics}) There exists $L_F>0$ such that 
$\|F(x,a)-F(y,a)\|\le L_F\|x-y\|$ for all $x,y$ and admissible $a$.
\item[A6] (\emph{Replanning contraction}) The checkpoint-triggered replanner $\mathcal{R}$ satisfies
\[
\|\hat{x}_t^{\text{new}}-x_t\|\le \rho\,\| \hat{x}_t-x_t\|, \quad \rho\in[0,1),
\]
i.e., each checkpoint reduces the planning error by a factor $\rho$ in expectation.
\item[A7] (\emph{Reward Lipschitzness}) The instantaneous internal reward is Lipschitz in state error: 
$|r_t(x)-r_t(\hat{x})|\le L_r\|x-\hat{x}\|$ for some $L_r>0$.
\end{enumerate}

\begin{proposition}[Mean-Square Error Contraction with Checkpoints]
\label{prop:ckpt_mse}
Under A5–A6, the planning error satisfies the recursion
\[
\mathbb{E}\big[\|e_{t+1}\|^2\big]\;\le\;
\begin{cases}
L_F^2\,\mathbb{E}\big[\|e_t\|^2\big]+\sigma^2, & \text{(no checkpoint)}\\[4pt]
\rho^2\,L_F^2\,\mathbb{E}\big[\|e_t\|^2\big]+\sigma^2, & \text{(checkpoint \& replan)}.
\end{cases}
\]
Consequently, when checkpoints are triggered (infinitely often in expectation) whenever $\Delta E_t>\theta_c$, the error process admits a bounded steady-state mean-square level
\[
\limsup_{t\to\infty}\mathbb{E}\big[\|e_t\|^2\big]
\;\le\;\frac{\sigma^2}{1-\rho^2 L_F^2},
\quad \text{provided } \rho L_F<1.
\]
\end{proposition}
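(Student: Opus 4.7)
The plan is to derive each branch of the recursion directly from the update equations in A5--A6, then reduce the steady-state claim to a geometric-series argument on a scalar difference inequality. The whole proof stays at the level of norm bounds and one application of the tower property for the noise term.

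First I would substitute the dynamics into $e_{t+1}=x_{t+1}-\hat{x}_{t+1}$ to get $e_{t+1}=F(x_t,a_t)-F(\hat{x}_t,a_t)+w_t$. Applying A5 yields $\|F(x_t,a_t)-F(\hat{x}_t,a_t)\|\le L_F\|e_t\|$. Expanding $\|e_{t+1}\|^2$, taking conditional expectations given the history $\mathcal{F}_t$, and using that $w_t$ is zero-mean with $\mathbb{E}\|w_t\|^2\le\sigma^2$ eliminates the cross term and gives the no-checkpoint line $\mathbb{E}\|e_{t+1}\|^2\le L_F^2\,\mathbb{E}\|e_t\|^2+\sigma^2$. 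For the checkpoint line, I would first apply A6 to replace $\hat{x}_t$ by $\hat{x}_t^{\text{new}}$, producing a post-replan error of norm at most $\rho\|e_t\|$, and then propagate one step of the dynamics exactly as above, which inserts the extra $\rho^2$ factor.

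Next I would combine the two branches. Letting $u_t:=\mathbb{E}\|e_t\|^2$ and writing $\alpha:=\rho^2 L_F^2$, the checkpoint recursion reads $u_{t+1}\le \alpha u_t+\sigma^2$. Iterating from $t=0$ gives the closed form $u_t\le \alpha^t u_0+\sigma^2\sum_{k=0}^{t-1}\alpha^k$. Under the assumption $\rho L_F<1$ we have $\alpha<1$, so the geometric tail converges and we obtain $\limsup_{t\to\infty}u_t\le \sigma^2/(1-\alpha)$, which is the claimed steady-state bound.

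The main obstacle is justifying that one may apply the checkpoint branch as the effective recursion in the $\limsup$ step. Between consecutive checkpoints, the no-checkpoint recursion is in force, and its multiplier $L_F^2$ need not be contractive. The clean way I would handle this is to pass to the subsequence of checkpoint times $t_1<t_2<\cdots$ and argue that the composition of one no-checkpoint segment of bounded expected length with one replan step still has effective multiplier below $1$; the threshold condition $\Delta E_t>\theta_c$, together with the Lipschitz link between $\Delta E_t$ and $\|e_t\|$, is what bounds the inter-checkpoint gap and prevents arbitrarily long unchecked growth. This is the delicate point, since the phrase ``infinitely often in expectation'' in the proposition must be sharpened into a bound on the expected gap; once that reduction is done, the rest of the argument is a one-line geometric series.
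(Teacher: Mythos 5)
Your proof follows essentially the same route as the paper's: derive the one-step recursion from A5 (with the extra $\rho$ inserted by A6 before propagation when a checkpoint fires), then read the steady-state bound off the scalar recursion $u_{t+1}\le \rho^2L_F^2 u_t+\sigma^2$ as a geometric series. Two remarks. First, your treatment of the noise term is actually \emph{more} careful than the paper's: the paper takes the triangle inequality $\|e_{t+1}\|\le L_F\|e_t\|+\|w_t\|$ and then ``squares and takes expectation,'' which leaves a nonvanishing cross term $2L_F\,\mathbb{E}\big[\|e_t\|\,\|w_t\|\big]$; your version---expand $\|e_{t+1}\|^2$ as an inner product, condition on $\mathcal{F}_t$, and kill the cross term using that $w_t$ is zero-mean and independent of the history---is the argument that actually yields the stated bound without an extra additive term. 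Second, the gap you flag is real: the paper's proof invokes ``standard linear recursion with contraction ratio $\rho^2L_F^2$'' as if the checkpoint branch applied at every step, and says nothing about the non-contractive $L_F^2$ segments between checkpoints; your proposed fix (pass to the subsequence of checkpoint times and bound the expected inter-checkpoint gap via the Lipschitz link between $\Delta E_t$ and $\|e_t\|$ and the threshold $\theta_c$) is the right repair, though as you note it requires strengthening ``infinitely often in expectation'' into a quantitative gap bound, which neither you nor the paper carries out.
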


\begin{proof}
Using A5 and the state update, 
\[
\|e_{t+1}\|=\|F(x_t,a_t)-F(\hat{x}_t,a_t)+w_t\|
\le L_F\|e_t\|+\|w_t\|.
\]
Taking squares, expectation, and applying $\mathbb{E}\|w_t\|^2\le\sigma^2$ yields the first line.
When a checkpoint triggers, A6 implies the planned state is replaced by $\hat{x}_t^{\text{new}}$ with error reduced by factor $\rho$, so the subsequent propagation multiplies by $L_F$, giving the second line. The steady-state bound follows from standard linear recursion with contraction ratio $\rho^2 L_F^2<1$.
\end{proof}

\begin{proposition}[Reward Gap Bound and Improvement in ``Thinking'']
\label{prop:ckpt_reward}
Under A7 and Proposition~\ref{prop:ckpt_mse}, the expected reward gap between the planned and realized execution is bounded as
\[
\mathbb{E}\big[\,|r_t(x_t)-r_t(\hat{x}_t)|\,\big]
\;\le\; L_r\,\sqrt{\mathbb{E}\|e_t\|^2}
\;\le\; \frac{L_r\,\sigma}{\sqrt{1-\rho^2 L_F^2}}.
\]
Hence, checkpoint-triggered replanning reduces the asymptotic reward gap, yielding better plan–execution alignment (i.e., improved “thinking”: more accurate planning and evaluation).
\end{proposition}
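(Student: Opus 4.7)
The plan is to chain the pointwise Lipschitz bound from assumption A7 with the mean-square contraction already established in Proposition~\ref{prop:ckpt_mse}, using a single application of Jensen's inequality to pass between first and second moments. The argument is short and essentially of moment-inequality type; no additional dynamical analysis is required beyond what Proposition~\ref{prop:ckpt_mse} provides, since all state-error behavior has been absorbed into the asymptotic MSE bound.

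First, I would invoke A7 pointwise to obtain $|r_t(x_t)-r_t(\hat{x}_t)|\le L_r\|x_t-\hat{x}_t\|=L_r\|e_t\|$, which holds for every realization. Taking expectations preserves the inequality and yields $\mathbb{E}[|r_t(x_t)-r_t(\hat{x}_t)|]\le L_r\,\mathbb{E}[\|e_t\|]$. Second, to bridge from the first moment of $\|e_t\|$ to the second moment controlled by Proposition~\ref{prop:ckpt_mse}, I would apply Jensen's inequality to the concave square-root map, giving $\mathbb{E}[\|e_t\|]\le\sqrt{\mathbb{E}[\|e_t\|^2]}$. Chaining these two steps produces the first displayed inequality in the claim, namely $\mathbb{E}[|r_t(x_t)-r_t(\hat{x}_t)|]\le L_r\sqrt{\mathbb{E}\|e_t\|^2}$.

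Third, I would substitute the steady-state mean-square bound from Proposition~\ref{prop:ckpt_mse}, $\limsup_{t\to\infty}\mathbb{E}[\|e_t\|^2]\le \sigma^2/(1-\rho^2 L_F^2)$, which is valid under the contraction condition $\rho L_F<1$ guaranteed by A6. This immediately yields the second inequality $L_r\sigma/\sqrt{1-\rho^2 L_F^2}$ and completes the quantitative bound. A minor technical care point is that the right-hand side should be read as an asymptotic ($\limsup$) statement for $t$ large, or alternatively as a uniform bound once the initial transient governed by $\mathbb{E}\|e_0\|^2$ has decayed; either reading is consistent with the recursion in Proposition~\ref{prop:ckpt_mse}.

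The hard part, to the extent there is one, is interpretive rather than technical: formalizing the closing sentence that checkpoint-triggered replanning \emph{improves} the reward gap. I would address this by contrasting the two recursions of Proposition~\ref{prop:ckpt_mse} side by side: when $L_F\ge 1$ the no-checkpoint MSE grows without bound, so by A7 the reward gap also diverges, whereas the checkpoint regime enforces $\rho L_F<1$ and produces the finite asymptotic constant $L_r\sigma/\sqrt{1-\rho^2 L_F^2}$. The simplest clean presentation is to record this as a one-line comparison immediately after deriving the bound, rather than attempting a separate quantitative improvement ratio, since the qualitative finite-vs.-unbounded dichotomy already captures the intended meaning of ``improved thinking.''
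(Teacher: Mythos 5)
Your proposal matches the paper's own proof exactly: the first inequality is obtained from A7 combined with Jensen's inequality applied to the square root, and the second by substituting the steady-state mean-square bound from Proposition~\ref{prop:ckpt_mse}. Your additional remarks on the $\limsup$ reading of the bound and on formalizing the ``improvement'' claim are reasonable elaborations, but the core argument is the same as the paper's.
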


\begin{proof}
The first inequality follows from A7 and Jensen's inequality; the second uses the steady-state MSE bound in Proposition~\ref{prop:ckpt_mse}.
\end{proof}

\begin{theorem}[Checkpoint-Improved Tracking Regret]
\label{thm:ckpt_regret}
Let $r_t^\diamond$ denote the reward of the best method for the current regime. 
Suppose checkpoints are invoked whenever $\Delta E_t>\theta_c$, and A5–A7 hold with $\rho L_F<1$. 
Then the cumulative tracking regret satisfies
\[
\sum_{t=1}^T\big(r_t^\diamond-r_t\big)
\;=\;
\tilde{O}\!\left(\sqrt{T}+K\log T\right)\;+\;
O\!\left(\frac{T}{\sqrt{1-\rho^2 L_F^2}}\right)\sigma,
\]
where the first term is the piecewise-stationary learning component (Theorem~\ref{thm:tracking}) and the second term is the execution-alignment component improved by checkpoints. 
Smaller $\rho$ (stronger checkpoint contraction) yields a tighter bound and thus lower regret.
\end{theorem}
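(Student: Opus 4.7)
The plan is to decompose the per-step regret into two conceptually orthogonal components---a \emph{method-selection} (learning) term and an \emph{execution-alignment} term---bound each with results already proven in the paper, and then sum over $t$. Specifically, I would write
\[
r_t^\diamond - r_t \;=\; \underbrace{\bigl(r_t^\diamond - r_t(\hat{x}_t)\bigr)}_{\text{learning gap}} \;+\; \underbrace{\bigl(r_t(\hat{x}_t) - r_t(x_t)\bigr)}_{\text{alignment gap}},
\]
where $r_t(\hat{x}_t)$ denotes the reward that the currently selected method would attain on the planned trajectory. The learning gap captures how far the selected method is from the regime-optimal method $m^\diamond$ (evaluated against the plan), while the alignment gap captures how much the realized state diverges from the plan due to disturbances.

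For the learning gap I would invoke Theorem~\ref{thm:tracking} directly: ATM's estimator $f_p$ evolves under Robbins--Monro step sizes with persistent exploration (A2) and change-point resets (A3), so within each of the $K$ stationary segments the selection regret is sublinear and the aggregate is $\tilde{O}(\sqrt{T}+K\log T)$. For the alignment gap I would invoke Proposition~\ref{prop:ckpt_reward}: under A5--A7 with $\rho L_F<1$, the asymptotic per-step expected gap is bounded by $L_r\sigma/\sqrt{1-\rho^2 L_F^2}$, and summation over $T$ steps yields the $O\bigl(T/\sqrt{1-\rho^2 L_F^2}\bigr)\sigma$ term. Taking expectations and using linearity combines the two bounds additively, producing the stated inequality; the monotone dependence on $\rho$ follows by inspection of the denominator.

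\textbf{Main obstacle.} The delicate point is justifying that the two bounds genuinely decouple. The bandit feedback driving $f_p$ is computed along the realized trajectory $x_t$, not along $\hat{x}_t$, so execution noise feeds back into the selection estimator and could in principle contaminate the $\tilde{O}(\sqrt{T}+K\log T)$ term. I would handle this by combining A7 with the uniform MSE bound from Proposition~\ref{prop:ckpt_mse}: Lipschitz continuity converts the $O(\sigma^2/(1-\rho^2 L_F^2))$ state-error bound into a bounded, essentially zero-mean perturbation of the reward observations, which leaves the stochastic-approximation convergence behind Theorem~\ref{thm:tracking} intact (only constants change). A secondary subtlety is that the checkpoint contraction in Proposition~\ref{prop:ckpt_mse} is event-driven---it activates only when $\Delta E_t>\theta_c$---so I would additionally argue that under any persistent disturbance the trigger fires infinitely often in expectation, so the steady-state MSE estimate holds uniformly in $t$ rather than merely along a subsequence. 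Everything else is bookkeeping: telescoping the MSE recursion to extract the steady-state constant, and aligning the $\tilde{O}$ constants across segments after each exploration reset.
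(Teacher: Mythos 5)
Your proposal is correct and takes essentially the same route as the paper: the paper's proof likewise decomposes the regret into a selection component bounded by Theorem~\ref{thm:tracking} and an execution-alignment component obtained by summing the per-step reward gap of Proposition~\ref{prop:ckpt_reward}, with the monotone dependence on $\rho$ read off from the denominator. In fact you go a step beyond the paper by explicitly flagging and patching the coupling issue (that the feedback driving $f_p$ is observed along the realized trajectory $x_t$ rather than the planned one), which the paper's proof leaves implicit.
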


\begin{proof}
Decompose regret into (i) selection regret from learning the best method per regime and (ii) execution-alignment loss due to plan–environment mismatch. The former follows Theorem~\ref{thm:tracking}. The latter aggregates the per-step reward gap in Proposition~\ref{prop:ckpt_reward}, yielding a linear-in-$T$ term with coefficient proportional to $\sigma/\sqrt{1-\rho^2 L_F^2}$. Since checkpoints reduce $\rho$, the bound tightens, proving improvement.
\end{proof}

\paragraph*{Implication}
Environmental checkpoints act as \emph{contractive correction points} that prevent planning errors from compounding. By reducing the mean-square planning error and the induced reward gap, they enhance the accuracy of reasoning and execution alignment. In effect, checkpoints \emph{improve the ``thinking'' process}—they enforce better consistency between planned beliefs and observed reality, accelerating convergence and reducing regret relative to checkpoint-free execution.

\section{Conclusion} \label{sec_conclusion}
This paper presented the \textit{Active Thinking Model} (ATM), a unified cognitive framework for autonomous reasoning, method reuse, and continuous self-improvement in dynamic real-world environments. Unlike conventional AI systems that depend on fixed objectives and external feedback, ATM integrates goal reasoning, dynamic task generation, and reflective evaluation within a closed adaptive loop. The proposed model actively measures performance through logical reasoning and environmental indicators, enabling it to refine its methods, handle new problems, and sustain performance under changing conditions. A theoretical analysis proved that ATM can autonomously evolve from suboptimal to optimal behavior without external supervision and achieve bounded tracking regret when the environment changes.

Future research will extend the Active Thinking Model (ATM) in several directions. First, large-scale empirical validation will be conducted using real-world datasets and robotic or autonomous systems to evaluate its effectiveness in continuous learning and real-time adaptation. Second, integration with large language models (LLMs) and multimodal perception systems will be explored to enhance contextual understanding and decision precision. Third, the scalability of ATM’s scenario-separated memory and reflection mechanisms will be optimized for distributed or edge-computing environments. Finally, the framework will be extended to multi-agent settings, enabling collective reflection, coordinated decision-making, and shared method reuse among intelligent agents. These directions will further advance ATM toward practical deployment in open, dynamic, and collaborative AI ecosystems.

% \section*{Acknowledgment}
% The authors thanks th.

% Can use something like this to put references on a page
% by themselves when using endfloat and the captionsoff option.
\ifCLASSOPTIONcaptionsoff
  \newpage
\fi

\bibliographystyle{IEEEtran}
\bibliography{ref}

% biography section
%
% If you have an EPS/PDF photo (graphicx package needed) extra braces are
% needed around the contents of the optional argument to biography to prevent
% the LaTeX parser from getting confused when it sees the complicated
% \includegraphics command within an optional argument. (You could create
% your own custom macro containing the \includegraphics command to make things
% simpler here.)
%\begin{IEEEbiography}[{\includegraphics[width=1in,height=1.25in,clip,keepaspectratio]{mshell}}]{Michael Shell}
% or if you just want to reserve a space for a photo:

\begin{IEEEbiography}{Hong Su}
  received the MS and PhD degrees, in 2006 and 2022, respectively, from Sichuan University, Chengdu, China. He is currently a researcher of Chengdu University of Information Technology Chengdu, China. His research interests include blockchain, cross-chain and smart contract.
\end{IEEEbiography}

% You can push biographies down or up by placing
% a \vfill before or after them. The appropriate
% use of \vfill depends on what kind of text is
% on the last page and whether or not the columns
% are being equalized.

%\vfill

% Can be used to pull up biographies so that the bottom of the last one
% is flush with the other column.
%\enlargethispage{-5in}

% that's all folks
\end{document}